\newtheorem{theorem}{Theorem}
\newtheorem{definition}{Definition}
\newtheorem{claim}{Claim}
\theoremstyle{nonumberplain}
\newtheorem{proof}{Proof}
 \DeclareMathOperator*{\argmax}{arg\,max}
\crefname{section}{Sec.}{Secs.}
\Crefname{section}{Section}{Sections}
\Crefname{table}{Table}{Tables}
\crefname{table}{Tab.}{Tabs.} 
\begin{document}

\title{Fantastic Style Channels and Where to Find Them:\\A Submodular Framework for Discovering Diverse Directions in GANs}

\author{\stepcounter{footnote}\vspace{1mm}Enis Simsar$^{1}$ 
\hspace{0.75cm}
Umut Kocasari$^{2}$ 
\hspace{0.75cm}
Ezgi Gülperi Er$^{2}$
\hspace{0.75cm}
Pinar Yanardag$^{2}$
\\
$^1$Technical University of Munich
\hspace{2em} 
$^2$Boğaziçi University
\hspace{2em} 
\\
{\tt\small enis.simsar@tum.de}
\hspace{0.5em}
{\tt\small umut.kocasari@boun.edu.tr}
\hspace{0.5em}
{\tt\small ezgi.er@boun.edu.tr}
\hspace{0.5em}
{\tt\small yanardag.pinar@gmail.com}
\vspace{-2cm}
}

\maketitle
\vspace*{-\baselineskip}
\begin{strip}\centering
\vspace{-0.3cm}
 
\centering
 \begin{minipage}{.45\textwidth}
        \centering
        \includegraphics[width=\textwidth]{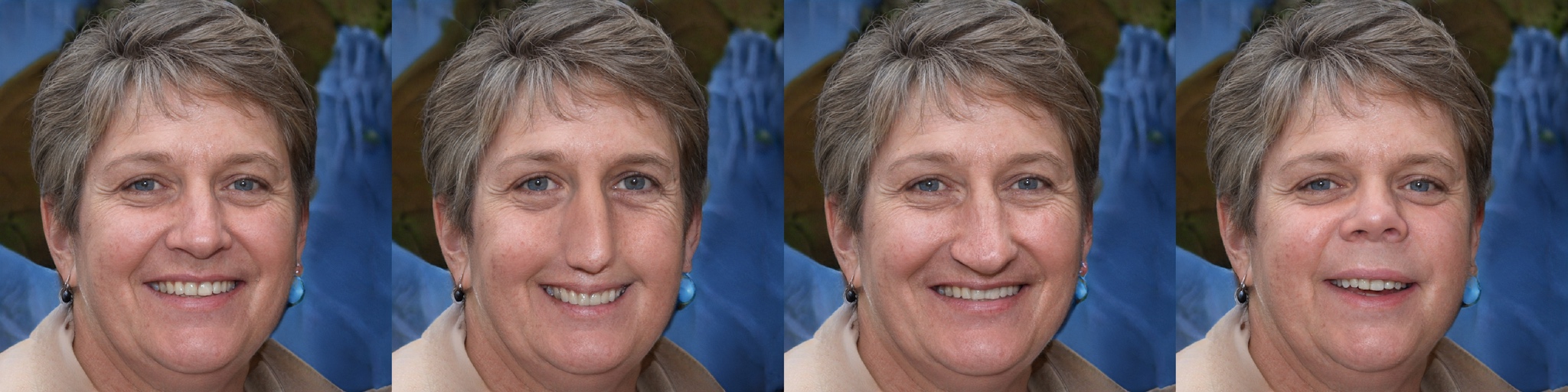}
        \vspace{-0.7cm}\captionof*{figure}{Channels from the same cluster modifying \textit{nose} in FFHQ.}
\end{minipage}
\begin{minipage}{0.45\textwidth}
        \centering
        \includegraphics[width=\textwidth]{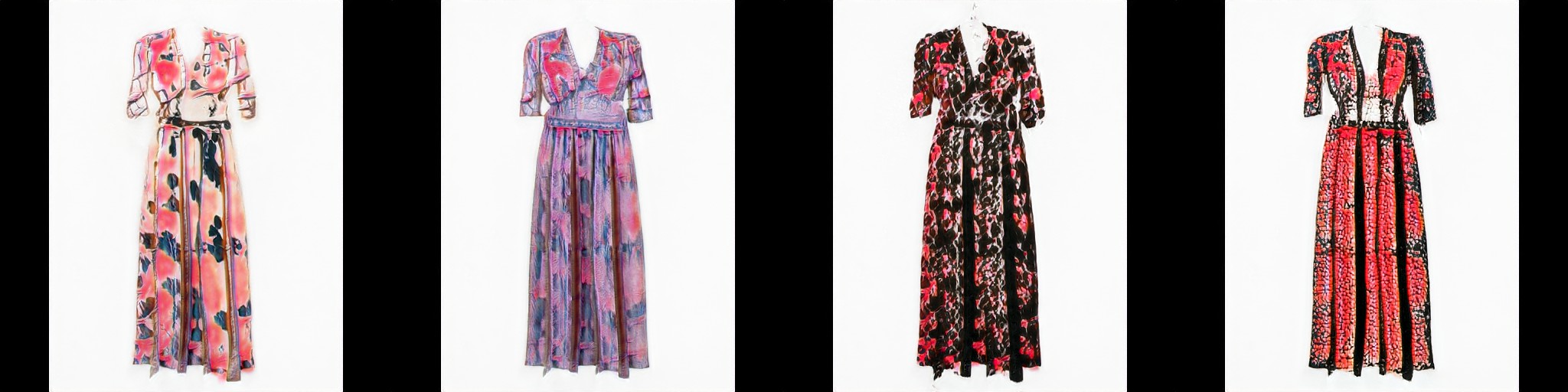}
        \vspace{-0.7cm}\captionof*{figure}{Channels from the same cluster modifying \textit{pattern} in Fashion.}
\end{minipage}
\begin{minipage}{0.9\textwidth}
       \centering
        \includegraphics[width=\textwidth]{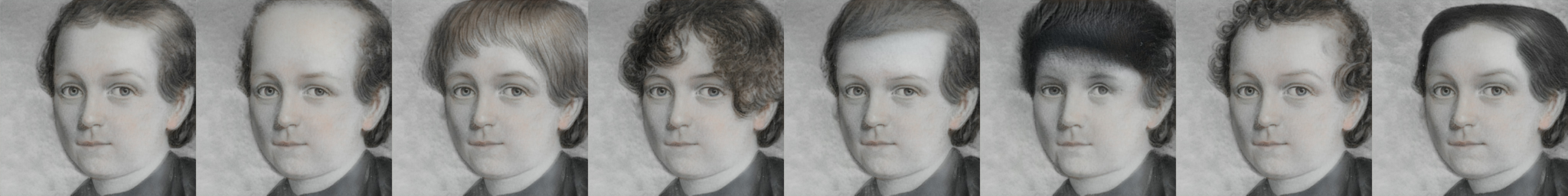}
        \vspace{-0.7cm}\captionof*{figure}{Channels from the same cluster modifying \textit{hair} in Metfaces.}
\end{minipage}
 \begin{minipage}{.45\textwidth}
        \centering
        \includegraphics[width=\textwidth]{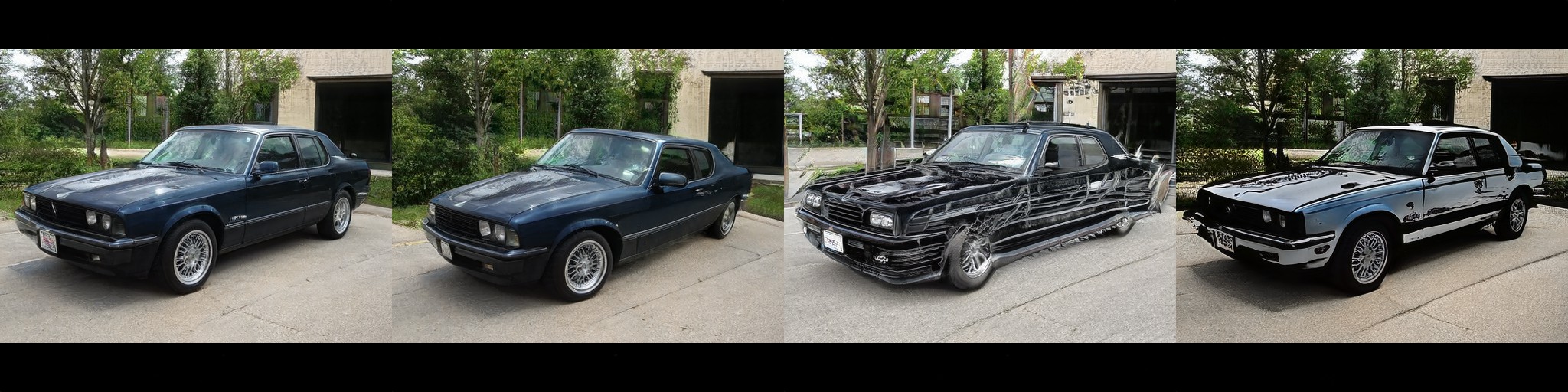}
        \vspace{-0.7cm}\captionof*{figure}{Channels from different clusters in LSUN Cars.}
\end{minipage}
\begin{minipage}{0.45\textwidth}
        \centering
        \includegraphics[width=\textwidth]{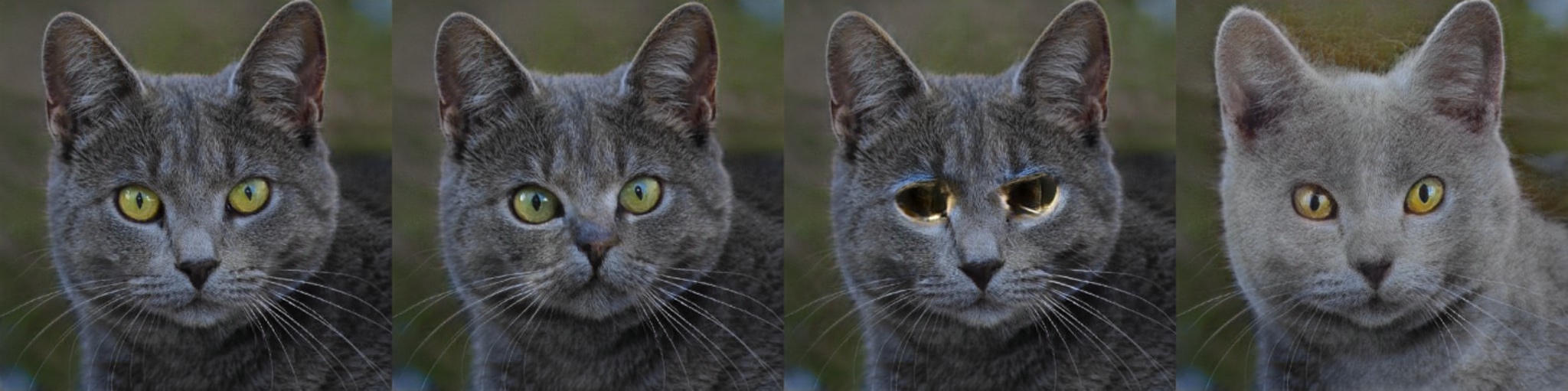}
        \vspace{-0.7cm}\captionof*{figure}{Channels from different clusters in AFHQ Cats.}
\end{minipage}
\captionof{figure}{Our submodular framework uses the notion of \textit{clusters} to select the most representative and diverse set of \textit{style channels}. Channels performing similar or different manipulations are shown in clusters above. The input images are displayed in the first column.
\label{fig:teaser}}
\end{strip}


\begin{abstract}
The discovery of interpretable directions in the latent spaces of pre-trained GAN models has recently become a popular topic. In particular, StyleGAN2 has enabled various image generation and manipulation tasks due to its rich and disentangled latent spaces. The discovery of such directions is typically done either in a supervised manner, which requires annotated data for each desired manipulation, or in an unsupervised manner, which requires a manual effort to identify the directions. As a result, existing work typically finds only a handful of directions in which controllable edits can be made. In this study, we design a novel submodular framework that finds the most \textit{representative} and \textit{diverse} subset of directions in the latent space of StyleGAN2. Our approach takes advantage of the latent space of channel-wise style parameters, so-called stylespace, in which we cluster channels that perform similar manipulations into groups. Our framework promotes diversity by using the notion of clusters and can be efficiently solved with a greedy optimization scheme. We evaluate our framework with qualitative and quantitative experiments and show that our method finds more diverse and disentangled directions. Our project page can be found at {\small \url{http://catlab-team.github.io/fantasticstyles}}.
\end{abstract}

\section{Introduction}
\label{sec:intro}
Recent GAN models such as StyleGAN2 \cite{StyleGAN} and BigGAN \cite{BigGAN} have achieved phenomenal success due to their ability to produce images with high visual quality and fidelity. StyleGAN, in particular, introduces a \textit{style-based} approach to transform random latent vectors into realistic images. Unlike traditional GAN architectures \cite{radford2015unsupervised,karras2017progressive}, style-based designs first transform the random latent vectors into an intermediate latent code using a mapping function, and then modify the channel-wise activation statistics of the model. Due to its rich and disentangled latent spaces, several approaches have been proposed to study the structure of the latent space of StyleGAN2 in a more principled way \cite{shen2020interfacegan,harkonen2020ganspace}. Some of these works aim to discover specific directions such as \textit{expression} or \textit{gender} using supervision \cite{shen2020interfacegan}, while others propose unsupervised approaches to identify semantically meaningful directions \cite{harkonen2020ganspace,voynov2020unsupervised}. Typically, the identified directions are used to modify the image semantics by shifting the latent code by a certain amount in the identified direction to increase or decrease the desired property.  However, while supervised methods such as \cite{shen2020interfacegan} manage to find the directions the user is interested in, they are limited since it is not always possible to find labeled data for the desired attribute. On the other hand, unsupervised methods such as \cite{harkonen2020ganspace,voynov2020unsupervised} find a certain number of directions, but the user has to manually explore what these directions are capable of. Not only this approach provide limited insight into the manipulation capabilities of latent space, but it is also time consuming for the user to explore these directions.

Recently, it has been shown that the StyleGAN2 method provides a variety of different latent spaces suitable for different image editing and manipulation tasks. For example, it has been shown that the $\mathcal{W+}$ space is suitable for image inversion \cite{abdal2019image2stylegan,tov2021designing}, while $\mathcal{S}$, the space of channel-wise style parameters (so-called \textit{stylespace}), allows disentangled edits \cite{wu2020stylespace}. This space offers rich editing capabilities where an arbitrary style channel is responsible for a particular edit, such as \textit{smile, eye color, hair type}. In other words, it is possible to perform disentangled manipulations by perturbing channel-wise style parameters of the image. While some previous work \cite{patashnik2021styleclip,wu2020stylespace} explores stylespace to find specific channels that perform a desired  in a supervised manner, what kind of manipulations stylespace has to offer in a fine-grained and unsupervised manner has not yet been explored. 

In this work we aim to find a subset of \textit{diverse} and \textit{representative} directions in latent spaces. We consider the search for directions in the latent space as a combinatorial optimization problem, where we view the latent space as a discretized set of items using the notion of style channels. Our task is then to select a  \textit{ subset} of channels that \textit{covers} the stylespace, while respecting the diversity in terms of types of manipulations they perform. This aspect is particularly important since stylespace provides more than 9K style channels and there is redundancy in what these channels cover. In particular, it has been shown that there are over 300 style channels dedicated to the control of the \textit{hair}, and over 180 channels dedicated to the \textit{ears} or \textit{background} \cite{wu2020stylespace}. Therefore, an objective function should consider \textit{diversity} into account when covering the stylespace. In other words, if a channel modifying the \textit{hair style} attribute is already selected, the gain of  covering another hair style channel should diminish. To address this issue, we design a novel framework that considers  representativeness of the channels while incorporating diversity. Our diversity objective benefits from clustering the latent space, where channels that perform similar edits are grouped under the same cluster (see Figure \ref{fig:teaser}). Our framework then ensures that selecting a channel from a cluster that has not yet been explored yields a higher gain. We formulate this task as a monotone submodular function maximization, for which there is a simple greedy algorithm guarantees that the solution obtained is almost as good as the best possible solution \cite{lin2011class}. Our contributions are as follows:

\begin{itemize} 
\item We propose the problem of finding diverse and representative style channels in the latent space of StyleGAN2 and design a submodular objective function that exhibits a natural property of diminishing returns, for which we can efficiently provide a near-optimal solution \cite{nemhauser1978analysis}. 

\item To the best of our knowledge, our framework is the first work to propose a submodular framework for finding latent directions, and the first attempt to provide a complete guide to discovering semantically meaningful \textit{groups} of style channels.

\item We share a web-based platform for navigating the stylespace at {\small  \url{http://catlab-team.github.io/styleatlas}}.
\end{itemize}
 
\section{Related Work}
\label{sec:related_work}
Recent research has shown that the latent space of GANs contains semantically meaningful directions that can be used for editing images in a variety of ways \cite{harkonen2020ganspace,jahanian2019steerability,voynov2020unsupervised}. Our approach builds on recent successes in discovering disentangled directions using \textit{stylespace}. We also benefit concepts from document summarization in the NLP literature \cite{lin2011class} to design our submodular framework. 

Several techniques are proposed to exploit the latent structure of GANs in supervised and unsupervised ways. Supervised approaches to exploit the latent space typically use pre-trained classifiers to guide the optimization process and discover directions.  \cite{shen2020interfacegan} trains a Support Vector Machine (SVM) \cite{noble2006support} with labeled data such as \textit{age, gender} and \textit{expression}. The normal vector of the resulting hyperplane is used as the latent direction. \cite{goetschalckx2019ganalyze} uses an externally trained classifier  to discover directions for cognitive image attributes in the latent space of BigGAN. Other approaches attempt to find interpretable directions in an unsupervised manner. \cite{voynov2020unsupervised} uses a classifier-based technique that finds a collection of directions that correlate with a variety of image modifications. \cite{jahanian2019steerability} presents an approach that is self-supervised and uses task-specific edit functions. \cite{shen2020closed} directly uses closed-form optimization of the intermediate weight matrix of GANs  and selecting the eigenvectors with the largest eigenvalues as directions. Ganspace \cite{harkonen2020ganspace} uses principal component analysis (PCA) \cite{wold1987principal} on randomly sampled latent vectors from the intermediate layers of BigGAN and StyleGAN2 and treats the generated principal components as latent directions. \cite{yuksel2021latentclr} uses a self-supervised contrastive learning based method to discover interpretable directions in the latent space of pre-trained BigGAN and StyleGAN2 models.

Existing work either provides limited exploration of stylespace in a supervised manner \cite{wu2020stylespace} or aims to identify relevant style channels using text-based prompts with a CLIP model \cite{patashnik2021styleclip}. In particular, \cite{wu2020stylespace} retrieves relevant channels based on a region or attribute classifier. However, what kind of manipulations stylespace has to offer in a fine-grained way has not yet been explored.

An important component of our framework requires that the channels in the stylespace are grouped into  clusters. As discussed later in Section \ref{sec:methodology}, we use the notion of \textit{clusters} to measure the diversity when covering the stylespace. There are several works that use clustering in the latent space of GAN models. We use clustering as a form of identifying similar channels and use this insight as a way to diversify coverage. \cite{collins2020editing} aims to edit an image based on a particular part of a reference image using k-means \cite{lloyd1982least}. Given a reference image and a target image, they exchange style codes based on regional differences to transfer the appearance of an object. \cite{chong2021retrieve} improves upon \cite{collins2020editing} by finding  more successful image-specific manipulation directions and eliminating the per-image matching overhead. \cite{pakhomov2021segmentation} clusters the feature maps to find meaningful and interpretable semantic classes that can be used to create segmentation masks. Compared to these methods, we aim to cluster style channels directly based on the regions they modify and use this as a way to diversify channel selection.
 
\section{Methodology}
\label{sec:methodology}

In our work, we view the latent space of StyleGAN2 as a discrete set of items using the notion of style channels in stylespace. The task we are interested in is then to select a subset of representative and diverse channels that \textit{cover} the stylespace. An overview of our framework can be found in Figure \ref{fig:framework}. Our method benefits from clustering style channels by grouping channels that perform similar manipulations. These clusters are then used in our submodular framework to promote diversity.

\subsection{Background on Stylespace}
\label{sec:stylegan}

The generation process of StyleGAN2 consists of several latent spaces, namely $\mathcal{Z}$, $\mathcal{W}$, $\mathcal{W+}$, and $\mathcal{S}$. More formally, let $\mathcal{G}$ be a generator which is a mapping function $\mathcal{G}: \mathcal{Z} \to \mathcal{X}$, where $\mathcal{X}$ is the target image domain. The latent code $\mathbf{z} \in \mathcal{Z}$ is drawn from a prior distribution $p(\mathbf{z})$, typically chosen as a Gaussian distribution. The $\mathbf{z}$ vectors are transformed into an intermediate latent space $\mathcal{W}$ using a mapper function consisting of 8 fully connected layers. The latent vectors $\mathbf{w} \in \mathcal{W}$ are then transformed into channel-wise style parameters and form the \textit{stylespace}, denoted $\mathcal{S}$, which is the latent space that determines the style parameters of the image. It has been shown that \cite{wu2020stylespace} style channels provide the most disentangled, complete, and informative space compared to others. However, it is still largely unexplored what style channels are capable of.

\begin{figure*}[ht!]
\vspace{-0.5cm}

\centering
        \includegraphics[width=0.75\textwidth]{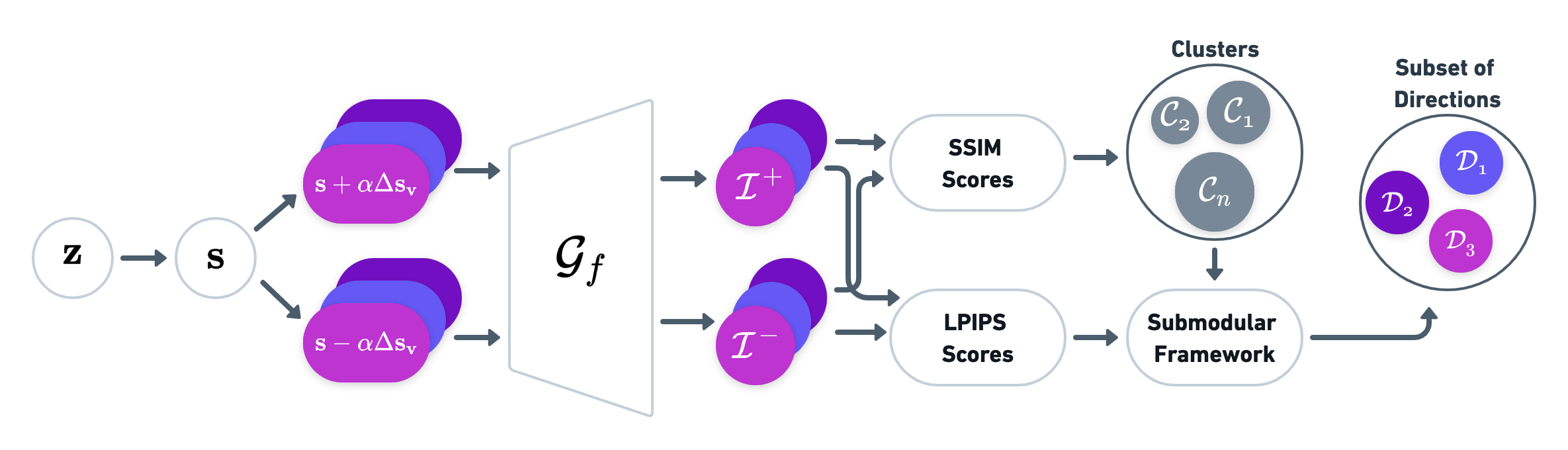}
        \vspace{-0.3cm}

 \caption{We randomly sample $M$ latent vectors $\mathbf{z} \in \mathcal{Z}$, which are transformed into style vectors $\mathbf{s}$. An arbitrary channel $v$ in $\mathcal{S}$ are perturbed by a certain amount $\alpha$ in positive and negative directions such that $(\mathbf{s}+\alpha\Delta\mathbf{s_v})$ and $(\mathbf{s}-\alpha\Delta\mathbf{s_v})$, where $\Delta\mathbf{s_v}$ is a vector containing all zeros except one of its dimensions, which is equal to one for channel $v$. LPIPS and SSIM scores are computed for the images obtained from the perturbed vectors, which are then used to generate clusters and select channels using the submodular framework. }
\label{fig:framework}
\vspace{-0.5cm}

\end{figure*}

\subsection{Background on Submodularity}
Let $\mathcal{V}$ represent a set of elements $\mathcal{V} = \{v_1, \ldots v_n\}$, often called as the \textit{ground set}. Let $\mathcal{F}: 2^\mathcal{V} \to \mathbb{R}$ represent a function that gives a real value for any subset $\mathcal{P} \subseteq \mathcal{V}$. The task we are interested in is then to select a small subset $|\mathcal{P}| \leq n$ that maximizes the function such that $\mathcal{P}^*\in\argmax_{\mathcal{P} \subseteq \mathcal{V}} \mathcal{F}(\mathcal{P})$. Solving this problem is intractable in general, but it has been shown that a greedy algorithm can be used to solve this equation almost optimally with an approximation factor of ($1-1/e$), under the condition that the function $\mathcal{F}$ is monotone, non-decreasing, and submodular \cite{sviridenko2004note}. The greedy algorithm simply starts with an empty set and at each iteration adds the item that maximizes the objective function. In other words, the solution $\mathcal{P}^*$ obtained by the greedy algorithm is a constant factor approximation to the best possible solution (say $\mathcal{P}_{\text{opt}}$) such that $\mathcal{F}(\mathcal{P}^*) \geq (1 - 1/e) \; \mathcal{P}_{\text{opt}} \approx 0.63 \; \mathcal{F}(\mathcal{P}_{\text{opt}})$. More formally, submodularity is defined as:
 
\begin{definition}
The function $\mathcal{F}$ is called submodular if for every $\mathcal{P}$ the following inequality holds: $\mathcal{F} (\mathcal{P} \cup \{ v\} ) - \mathcal{F}(\mathcal{P}) \leq \mathcal{F} (\mathcal{R} \cup \{ v \} ) - \mathcal{F}(R)$, if $R\subseteq P \subseteq \mathcal{V}$ and $v \in \mathcal{V} \setminus \mathcal{P}$.
This form of submodularity directly satisfies the diminishing returns property;
the {\em value} of the addition of $v$ never becomes larger as the context becomes larger \cite{nemhauser1978analysis}.
\end{definition}

\subsection{A Submodular Framework to Cover Stylespace}
Let $\mathcal{V}$ represent the set of style channels in the stylespace. Then, we are interested in selecting a small subset of channels $\mathcal{P} \subseteq \mathcal{V}$ that are most representative and diverse. To measure the overall \textit{coverage} or \textit{fidelity} of the channels in $\mathcal{P}$, we can define a set function as follows,

\begin{equation}
\mathcal{F}_{coverage}(\mathcal{P}) = \sum_{v_i \in \mathcal{V}, v_j \in \mathcal{P}} \mathcal{F}_{\text{sim}}(v_i, v_j)
\label{eq:coverage}
\end{equation} 
\noindent which simply computes the similarity between the summary set $\mathcal{P}$ and the ground set $\mathcal{V}$. In other words, it measures some form of coverage of $\mathcal{V}$ by $\mathcal{P}$. $\mathcal{F}_{\text{sim}}$ measures the similarity between two channels (see Section \ref{sec:cluster}). 

However, this function does not take diversity into account, since the value of the covering a particular type of edit (such as \textit{hair} or \textit{background}) never diminishes. For example, such a coverage function might favor selecting several background channels without considering diversity, since background is one of the most popular types of edits in stylespace (see Appendix \ref{app:bg}). In contrast, if we already have a channel that modifies the background in our summary set $\mathcal{P}$, then we want the gain for selecting another background channel to \textit{decrease}. A common approach is to apply a diversity regularization to our objective function \cite{lin2011class}, where we aim to reward items selected from different groups of directions such that:

\begin{equation}
\mathcal{F}_{diversity}(\mathcal{P}) = \sum_{k=1}^K  \left( \log  \left(1+ \sum_{v_i \in \mathcal{C}_k \cap \mathcal{P}}   \mathcal{F}_{\text{reward}}({v_i}) \right) \right) 
\label{eq:diversity}
\end{equation}

\noindent where the ground set $\mathcal{V}$ of style channels is partitioned into $K$ separate clusters. The clusters $\mathcal{C}_k$ are disjoint, where  $k=1, \ldots K$ and  $\bigcup_k \mathcal{C}_k = \mathcal{V}$. For each style channel $v_i$, we have a reward $\mathcal{F}_{\text{reward}}({v_i}) \geq 0$, which indicates the importance of adding channel $v_i$ to the empty set (see Section \ref{sec:reward}). 

Let us explain the intuition behind $\mathcal{F}_{diversity}$ in more detail. The idea is that when a channel is selected, the gain decreases for channels from the same cluster due to the concave function $\log(1+x)$. For example, suppose that the candidate channels in cluster $\mathcal{C}_1$ are $v_1$ and $v_2$, which have rewards of $5$ and $4$, respectively. Similarly, the cluster $\mathcal{C}_2$ has a candidate channel, $v_3$ with a score of $3$. When we evaluate the objective function in Eq. (\ref{eq:diversity}) for the first time, we select $v_1$ since it has the largest marginal gain. However, the next time we choose channel $v_3$, even though the score of $v_2$ is higher, because 
 $\log(5 + 4) < \log(5) + \log(3)$. Intuitively, this means that selecting a channel from a cluster that has not yet been explored will yield a higher gain than selecting a channel from a cluster that we  already covered. Thus, the objective function rewards diversity by selecting elements from different clusters and prevents popular channels such as \textit{background} from dominating the selected set.

Then, the overall objective function we want to solve is a combination of both:

 \begin{equation}
 \mathcal{F}(\mathcal{P}) =   \mathcal{F}_{coverage}(\mathcal{P}) + \lambda \mathcal{F}_{diversity}(\mathcal{P})  
 \label{eq:submod_channels}
 \end{equation}
\noindent where $\lambda \geq 0$ is the tradeoff coefficient between coverage and diversity. Since we are interested in selecting a small subset, we aim to maximize the following objective function,

\begin{equation}
\mathcal{P}^* = \argmax_{\mathcal{P} \subseteq \mathcal{V}: |\mathcal{P}| \leq n}  \mathcal{F}(\mathcal{P})
\label{eq:argmax}
\end{equation} 
\noindent subject to a cardinality constraint $n$, which denotes the total number of channels in the set $\mathcal{P}^*$. This objective function combines two aspects in which we are interested: 1) it encourages the selected set to be \textit{representative} of the stylespace, and 2) it positively rewards \textit{diversity}. Finding the exact subset that maximizes this equation is intractable. However, it has been shown that maximizing a monotone submodular function under a cardinality constraint can be solved near optimally using a {\em greedy} algorithm \cite{nemhauser1978analysis}. In particular, if a function $\mathcal{F}$ is submodular, monotone and takes only non-negative values, then a greedy algorithm  approximates the optimal solution of the Eq. \eqref{eq:argmax} within a factor of $(1 - 1/e) $ \cite{nemhauser1978analysis}. Note that this property is particularly attractive because it is a worst-case bound. In most cases, the quality of the obtained solution of submodular optimization problems is much better than this bound suggests \cite{lin2011class}.

\begin{theorem}
Given two functions $\mathcal{F}: 2^\mathcal{V} \to \mathbb{R}$ and $f: \mathbb{R} \to \mathbb{R}$, the composition $\mathcal{F}' = f \, \circ \, \mathcal{F}: 2^\mathcal{V} \to \mathbb{R}$  is non-decreasing submodular, if $\mathcal{F}$ is non-decreasing concave and $\mathcal{F}$ is non-decreasing
submodular. \cite{lin2011class}
\label{theorem:concave}
\end{theorem}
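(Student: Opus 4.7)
The plan is to verify the two defining properties of a non-decreasing submodular set function for $\mathcal{F}'$ directly, using only the hypothesized monotonicity and concavity of $f$ together with the monotonicity and submodularity of $\mathcal{F}$. (I read the statement as having a typo: the composition is submodular when $f$ is non-decreasing and concave and $\mathcal{F}$ is non-decreasing and submodular.)

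First, I would dispose of monotonicity. For $R \subseteq P \subseteq \mathcal{V}$, monotonicity of $\mathcal{F}$ gives $\mathcal{F}(R)\le\mathcal{F}(P)$, and monotonicity of $f$ then yields $f(\mathcal{F}(R))\le f(\mathcal{F}(P))$, which is exactly $\mathcal{F}'(R)\le \mathcal{F}'(P)$. This step is essentially immediate.

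The substantive step is submodularity. Fix $R\subseteq P\subseteq \mathcal{V}$ and $v\in\mathcal{V}\setminus P$, and introduce the shorthand $a=\mathcal{F}(R)$, $b=\mathcal{F}(P)$, $c=\mathcal{F}(R\cup\{v\})$, $d=\mathcal{F}(P\cup\{v\})$. Monotonicity of $\mathcal{F}$ gives $a\le b$, $a\le c$, $b\le d$, and submodularity of $\mathcal{F}$ gives the crucial inequality $d-b\le c-a$. The goal reduces to showing $f(d)-f(b)\le f(c)-f(a)$. My plan is to chain two one-line comparisons:
\begin{equation*}
f(d)-f(b)\;\le\; f(b+(c-a))-f(b)\;\le\; f(c)-f(a).
\end{equation*}
The first inequality is because $d\le b+(c-a)$ (from submodularity of $\mathcal{F}$) combined with monotonicity of $f$. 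The second is the standard ``concave functions have non-increasing differences'' lemma applied to the fixed increment $\delta:=c-a\ge 0$ at the two base points $a\le b$; namely, the map $x\mapsto f(x+\delta)-f(x)$ is non-increasing in $x$ whenever $f$ is concave, so evaluating at $x=a$ and $x=b$ gives $f(b+\delta)-f(b)\le f(a+\delta)-f(a)=f(c)-f(a)$.

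The only nontrivial ingredient is that auxiliary concavity lemma, which I would either cite or prove in one line by writing $b=(1-t)a+t(b+\delta)$ for a suitable $t\in[0,1]$ and then applying the definition of concavity (and doing the same for $b+\delta$ as a convex combination of $a+\delta$ and $b$) to deduce the comparison of secant slopes. The main obstacle is really just bookkeeping: one has to be careful that all the quantities in play are non-negative and that the two increments being compared start from ordered base points, so that concavity applies in the correct direction. Once the chain of inequalities above is in place, the result follows.
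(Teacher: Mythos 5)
Your proof is essentially correct, but note first that the paper itself does not prove this theorem at all: it is imported verbatim (typo included) from Lin and Bilmes \cite{lin2011class} and used as a black box in the proof of the Claim, so there is no in-paper argument to compare against. You were right to read the hypothesis as ``$f$ is non-decreasing concave'' --- as stated, with $\mathcal{F}$ both ``concave'' and ``submodular,'' the statement is a typo. Your argument is the standard one: monotonicity of the composition is immediate, and for submodularity you set $a=\mathcal{F}(R)$, $b=\mathcal{F}(P)$, $c=\mathcal{F}(R\cup\{v\})$, $d=\mathcal{F}(P\cup\{v\})$, use $d\le b+(c-a)$ together with $f$ non-decreasing, and then the non-increasing-increments property of concave functions with increment $\delta=c-a\ge 0$ at base points $a\le b$. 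That chain is valid and is exactly how this lemma is proved in the submodularity literature.

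One small wrinkle in your sketch of the auxiliary lemma: you propose writing ``$b+\delta$ as a convex combination of $a+\delta$ and $b$,'' but $b+\delta\ge\max(a+\delta,\,b)$, so it is not such a combination (except degenerately). The clean one-line version writes \emph{both} interior points $b$ and $a+\delta$ as convex combinations of the endpoints $a$ and $b+\delta$, with weights summing to one, and adds the two concavity inequalities to get $f(a+\delta)+f(b)\ge f(a)+f(b+\delta)$, which is the desired $f(b+\delta)-f(b)\le f(a+\delta)-f(a)$. With that repair (or simply citing the standard secant-slope monotonicity of concave functions), your proof is complete and, unlike the paper, self-contained.
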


\begin{claim}: The function in Eq. \eqref{eq:submod_channels} is submodular.
\end{claim}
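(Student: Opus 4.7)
The plan is to decompose $\mathcal{F}$ into its two pieces and argue submodularity for each separately, then combine using the fact that a non-negative linear combination of submodular functions is itself submodular.

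First I would handle $\mathcal{F}_{coverage}$. Since the outer sum in Eq.~\eqref{eq:coverage} ranges over the fixed ground set $\mathcal{V}$, independent of $\mathcal{P}$, I can swap the order of summation to write
\begin{equation*}
\mathcal{F}_{coverage}(\mathcal{P}) = \sum_{v_j \in \mathcal{P}} w(v_j), \qquad w(v_j) := \sum_{v_i \in \mathcal{V}} \mathcal{F}_{\text{sim}}(v_i, v_j).
\end{equation*}
This exhibits $\mathcal{F}_{coverage}$ as a modular (additive) set function, which trivially satisfies the submodular inequality in Definition~1 with equality. Assuming $\mathcal{F}_{\text{sim}} \geq 0$, it is also non-negative and monotone non-decreasing.

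Next I would handle $\mathcal{F}_{diversity}$ cluster by cluster. For each $k$, define $h_k(\mathcal{P}) := \sum_{v_i \in \mathcal{C}_k \cap \mathcal{P}} \mathcal{F}_{\text{reward}}(v_i)$. Because $\mathcal{F}_{\text{reward}} \geq 0$, each $h_k$ is a non-negative, monotone, modular (and hence submodular) set function on $2^{\mathcal{V}}$. The map $x \mapsto \log(1+x)$ is non-decreasing and concave on $[0,\infty)$, so applying Theorem~\ref{theorem:concave} to the composition $\log(1+h_k(\cdot))$ yields a monotone submodular function. Summing over $k$ then gives
\begin{equation*}
\mathcal{F}_{diversity}(\mathcal{P}) = \sum_{k=1}^{K} \log\bigl(1 + h_k(\mathcal{P})\bigr),
\end{equation*}
which is submodular as a finite sum of submodular functions.

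Finally, since $\lambda \geq 0$ and both $\mathcal{F}_{coverage}$ and $\mathcal{F}_{diversity}$ are submodular, the weighted sum $\mathcal{F} = \mathcal{F}_{coverage} + \lambda \mathcal{F}_{diversity}$ satisfies the diminishing returns inequality termwise, establishing the claim. The only subtlety worth being careful about is the direction of application of Theorem~\ref{theorem:concave}: it requires the \emph{outer} function to be concave non-decreasing and the \emph{inner} set function to be submodular non-decreasing, so I would explicitly verify that each $h_k$ is monotone (immediate from $\mathcal{F}_{\text{reward}} \geq 0$) before invoking the theorem. I do not expect any real obstacle here, as the structure of the objective was specifically engineered to match this composition lemma; the only bookkeeping point is that Eq.~\eqref{eq:coverage} is in fact modular rather than genuinely submodular, which is a strictly stronger property and therefore still sufficient.
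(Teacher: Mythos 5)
Your proposal is correct and follows essentially the same route as the paper's own proof: $\mathcal{F}_{coverage}$ is modular (hence submodular and monotone), the inner per-cluster reward sums are non-negative modular functions composed with the concave non-decreasing $\log(1+x)$ via Theorem~\ref{theorem:concave}, and the final objective is a non-negative combination of submodular functions. Your version is simply a more carefully spelled-out rendering of the same argument, including the correct hypotheses of the composition theorem that the paper states somewhat loosely.
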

\begin{proof} The $\mathcal{F}_{coverage}(\mathcal{P})$ is a sum of modular functions  with non-negative weights (hence, monotone). Similarly, the sum of non-negative rewards in $\mathcal{F}_{diversity}(\mathcal{P})$ is also monotone. This monotone function is surrounded by a non-decreasing concave function $\log(1+x)$. Applying a concave function to a monotone function, we obtain a submodular function (see Theorem \ref{theorem:concave}). Finally, the sum of a collection of submodular functions is submodular \cite{stobbe2010efficient}, so $\mathcal{F}(\mathcal{P})$ in Eq. \eqref{eq:submod_channels} is submodular.
\end{proof}

\subsubsection{Reward of channels} \label{reward}
\label{sec:reward}
 
Our framework requires a singleton reward associated with each style channel for the diversity objective. To this end, we use the LPIPS\cite{zhang2018unreasonable} metric as a proxy for the reward score, where channels with more perceptual changes have a larger value. First, we sample $M$ random latent vectors $\mathbf{z}$ $\in$ $\mathcal{Z}$ and pass them through the mapping network of StyleGAN2 to obtain their corresponding style vectors $\mathbf{s}$. Given an arbitrary channel $v\footnote{We drop the subscript of $v$ in the rest of this paper for clarity.} \in \mathcal{S}$, we perturb the value of channel $v$ in each style vector $\mathbf{s}$, while leaving the other channels unchanged, and generate modified images, $\mathcal{G}(\mathbf{s}+\alpha\Delta\mathbf{s_v})$ and $\mathcal{G}(\mathbf{s}-\alpha\Delta\mathbf{s_v})$. $\Delta\mathbf{s_v}$ is a vector containing all zeros except one of its dimensions, which is equal to one for channel $v$, and $\alpha$ denotes the magnitude of the perturbation. We run both images through the VGG16 \cite{simonyan2014very} network and compute the L2 difference between their feature embeddings. This process is repeated for $M$ style vectors and the average LPIPS score for each channel $v$ is calculated as the reward value $\mathcal{F}_{\text{reward}}(v)$.

\subsubsection{Clustering Stylespace} 
\label{sec:cluster} 
Our method quantifies diversity by using the notion of clusters $\mathcal{C}_k, k=1, \ldots K$, where channels performing similar edits are grouped together. Using the same approach as above, we first obtain the perturbed images for each style vector $\mathbf{s}$ such that $\mathcal{G}(\mathbf{s}+\alpha\Delta\mathbf{s_v})$ and $\mathcal{G}(\mathbf{s}-\alpha\Delta\mathbf{s_v})$. Then we compute the structural similarity index (SSIM) \cite{wang2004image}, which is a metric for measuring the similarity between two  images. In particular, we obtain the image difference between two images, where the difference is represented as a value in the range [0, 255]. This process is repeated for each style channel in $\mathcal{S}$ for a total of $M$ style vectors, resulting in $|\mathcal{S}| \times M$ matrices of SSIM scores. We then compute the cosine distance between the SSIM matrix of each style channel, with the distance between channels averaged over $M$ style vectors. We use the resulting matrix as a distance matrix in agglomerative clustering \cite{florek1951liaison} to cluster style channels into groups. We have experimented with both agglomerative clustering and k-means algorithms and found that they yield similar clusters. We use agglomerative clustering to group the channels since it uses a precomputed distance matrix to speed up the clustering process and does not require tuning the number of clusters. We note that clustering at individual layers leads to finer-grained  clusters for models such as FFHQ. For such large models, we perform clustering at each layer and then group the clusters based on the regions they modify (e.g., \textit{hair, ear, background}) using a segmentation model \cite{lee2020maskgan}.

SSIM scores are also used to compute the similarity between two style channels. Given two style channels $v_i$ and $v_j$, $\mathcal{F}_{\text{sim}}$ in Eq. (\ref{eq:coverage}) is calculated as the cosine similarity between the SSIM matrix of each channel, averaged over $M$ style codes.


\section{Experiments}
\label{sec:experiments}
We conduct several qualitative experiments to demonstrate the effectiveness of the submodular framework and compare our method to supervised \cite{wu2020stylespace} and unsupervised methods \cite{harkonen2020ganspace,shen2020closed}. We also explore clusters of StyleGAN2 on a variety of datasets, including FFHQ \cite{StyleGAN}, LSUN Cars \cite{yu2015lsun}, AFHQ Cats \cite{Choi2020StarGANVD}, Metfaces \cite{Karras2020TrainingGA}, and Fashion. For Fashion model, we train a StyleGAN2 model with dataset collected from \cite{farfetch,netaporter}. Finally, we present two applications that leverage our framework to allow users to explore stylespace.

\begin{figure}[t!]\centering

\phantom{a}
\begin{minipage}{.47\columnwidth}
    \centering
    \scriptsize{\textbf{Cluster 1}}
    \vspace{0.05cm}
\end{minipage}
\begin{minipage}{.47\columnwidth}
    \centering 
    \scriptsize{\textbf{Cluster 2}}
    \vspace{0.05cm}
\end{minipage}

\rotatebox[origin=lc]{90}{\centering \hspace{-0.35cm} \scriptsize{\textbf{FFHQ}}}
\begin{minipage}{.47\columnwidth}
         \centering
 \includegraphics[width=\textwidth]{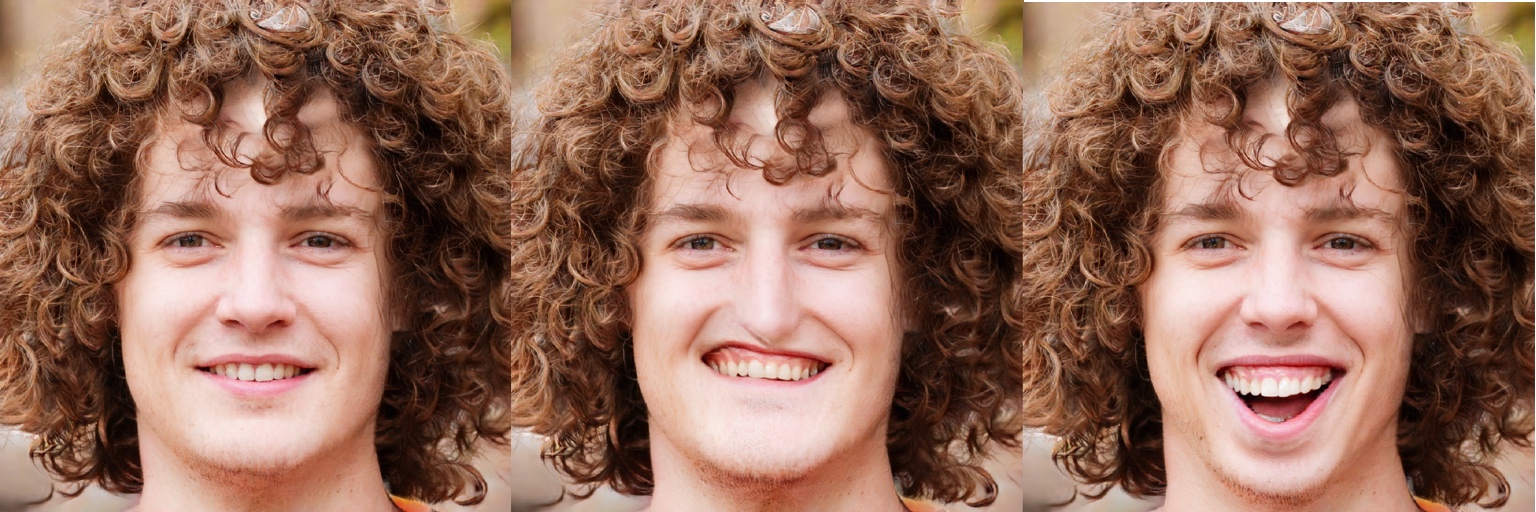}
         \vspace{-1.02cm}\captionof*{figure}{ }
 \end{minipage}
\begin{minipage}{.47\columnwidth}
         \centering
          \includegraphics[width=\textwidth]{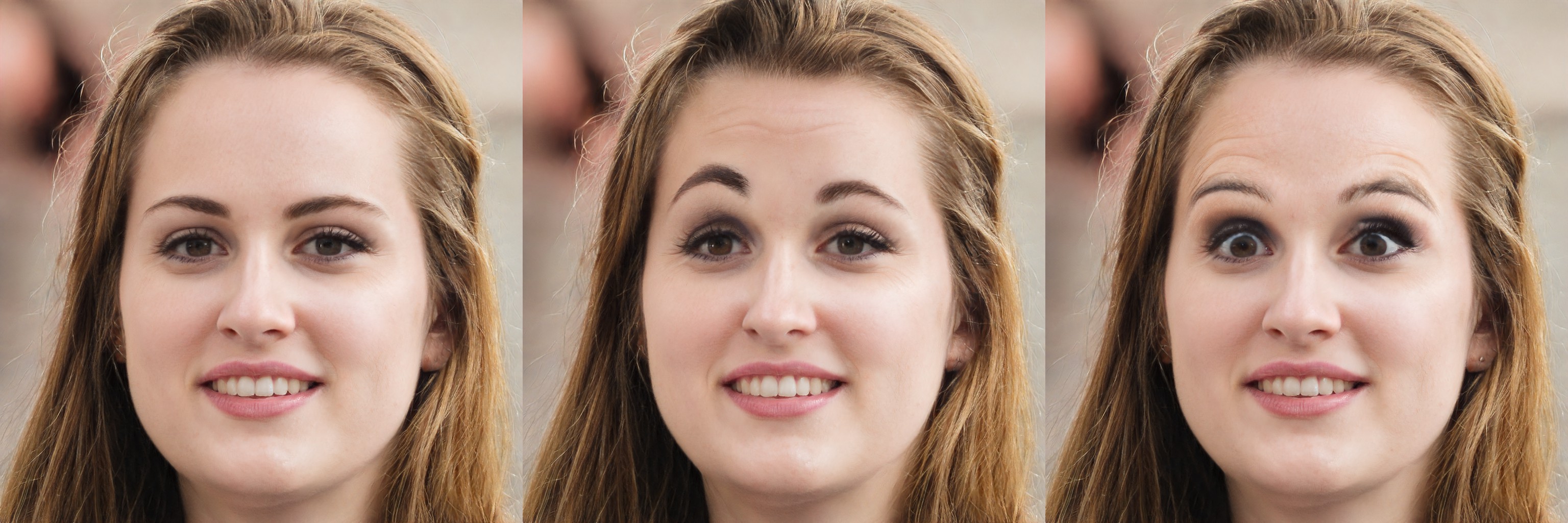}
         \vspace{-1.02cm}\captionof*{figure}{}
 \end{minipage}

 \rotatebox[origin=lc]{90}{\centering \hspace{-0.4cm} \scriptsize{\textbf{Fashion}}}
\begin{minipage}{.47\columnwidth}
     \centering
     \includegraphics[width=\textwidth]{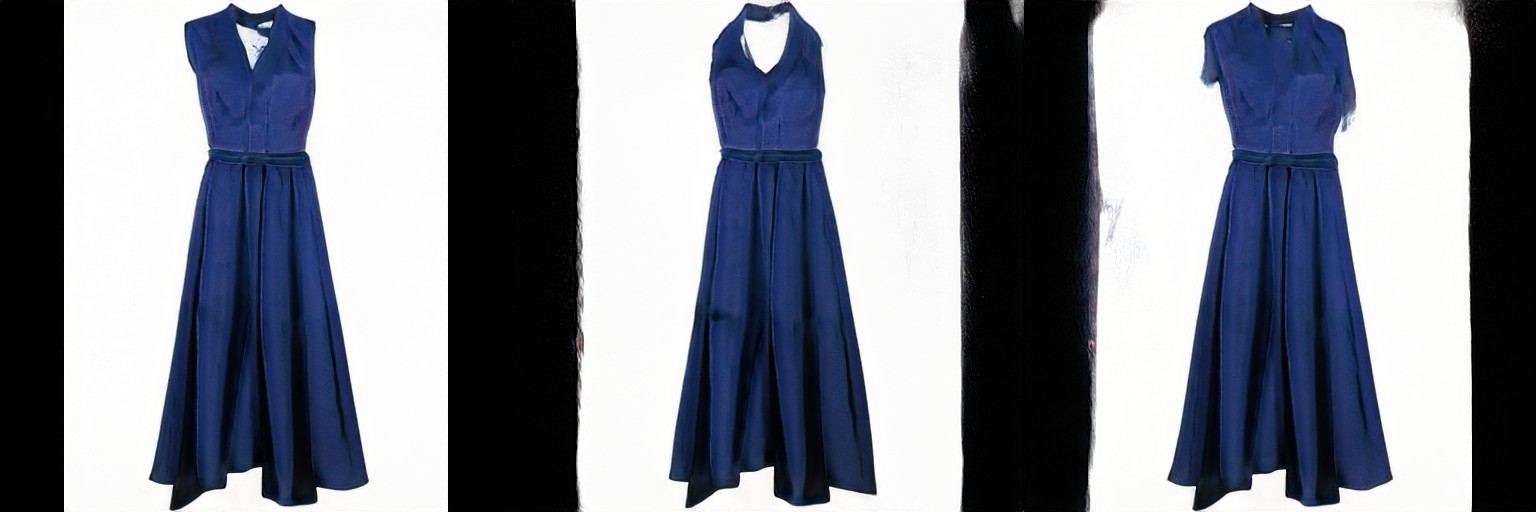}
     \vspace{-1.02cm}\captionof*{figure}{ }
\end{minipage}
\begin{minipage}{0.47\columnwidth}
     \centering
     \includegraphics[width=\textwidth]{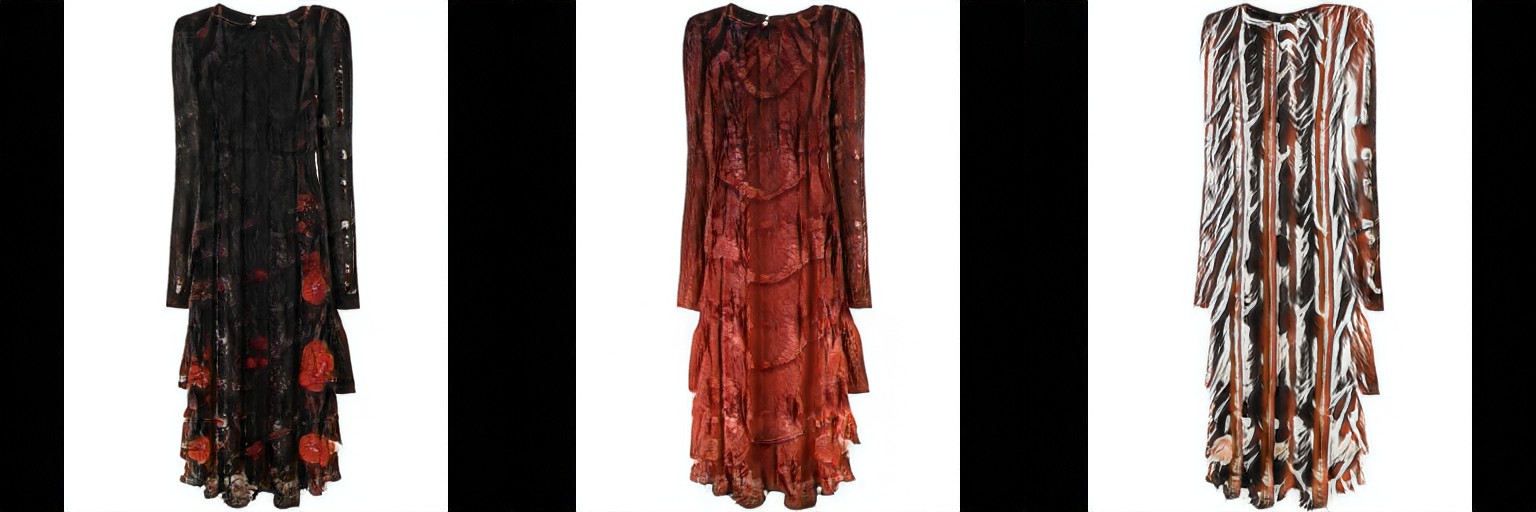}
     \vspace{-1.02cm}\captionof*{figure}{}
\end{minipage}

\rotatebox[origin=lc]{90}{\centering \hspace{-0.65cm} \scriptsize{\textbf{AFHQ Cats}}}
\begin{minipage}{.47\columnwidth}
     \centering
        \includegraphics[width=\textwidth]{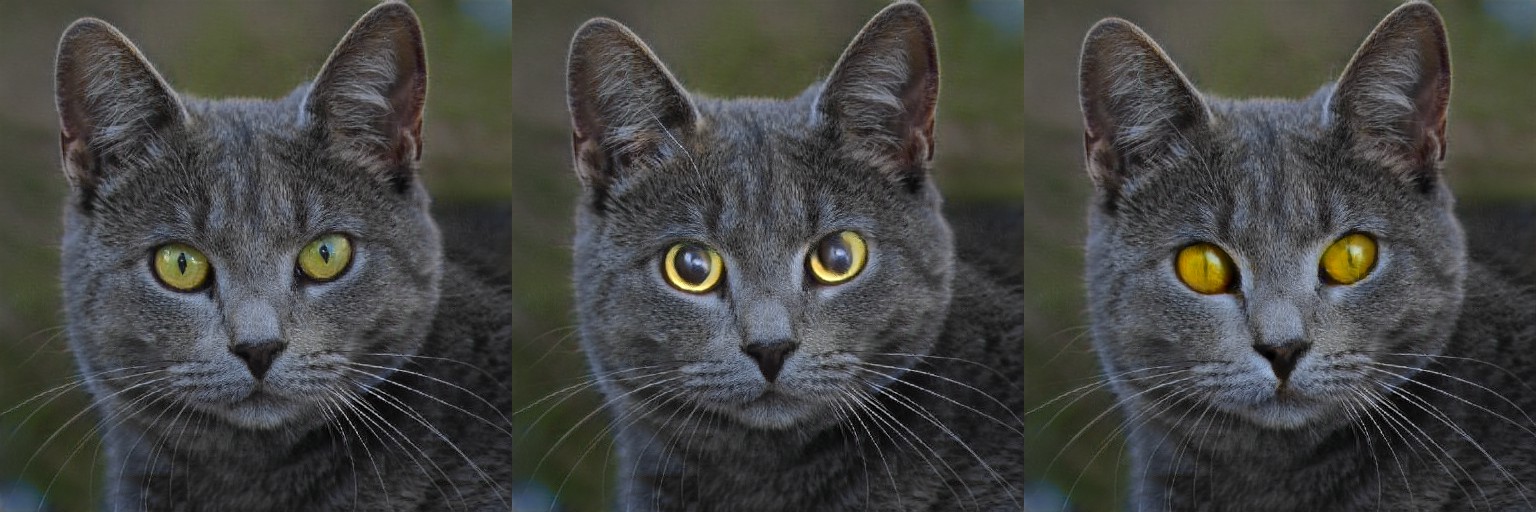}
     \vspace{-1.02cm}\captionof*{figure}{ }
\end{minipage}
\begin{minipage}{.47\columnwidth}
     \centering
     \includegraphics[width=\textwidth]{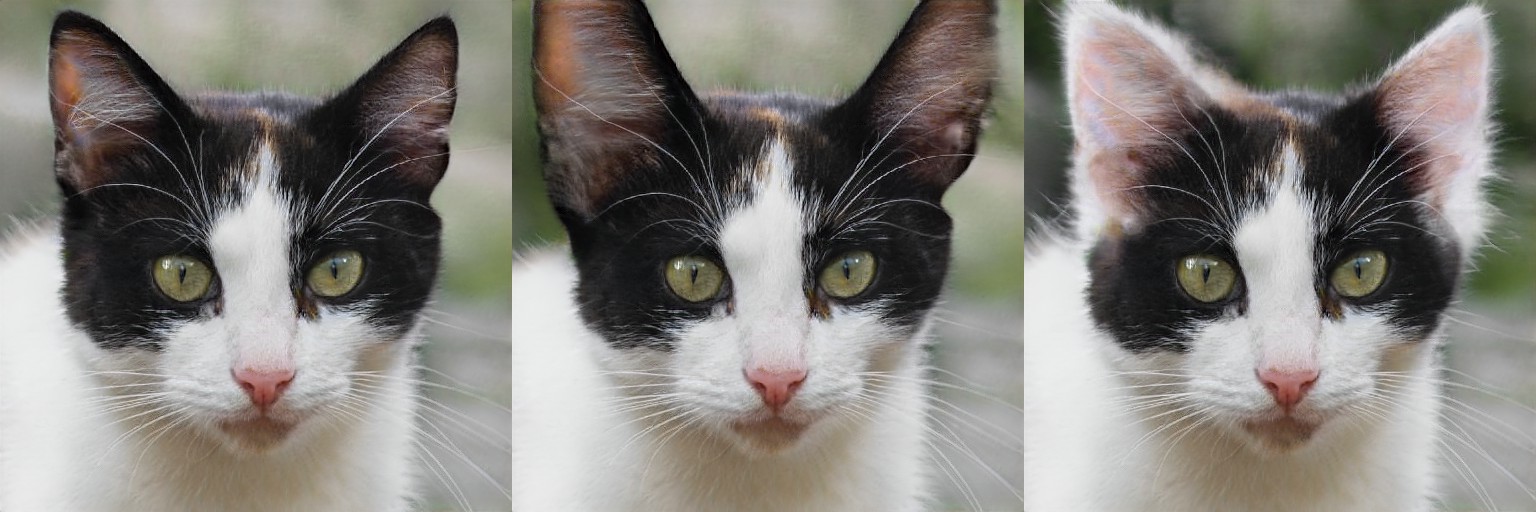}
     \vspace{-1.02cm}\captionof*{figure}{}
\end{minipage}

 \rotatebox[origin=lc]{90}{\centering \hspace{-0.65cm} \scriptsize{\textbf{LSUN Cars}}}
\begin{minipage}{.47\columnwidth}
         \centering
         \includegraphics[width=\textwidth]{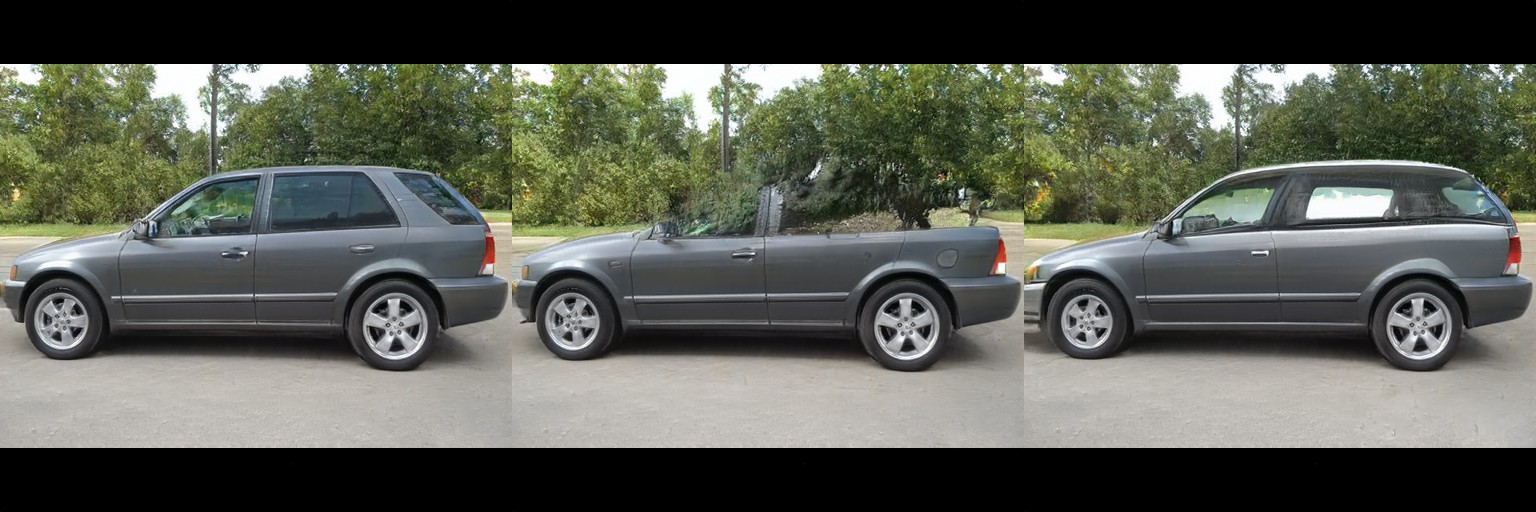}
         \vspace{-1.02cm}\captionof*{figure}{ }
 \end{minipage}
\begin{minipage}{.47\columnwidth}
         \centering
         \includegraphics[width=\textwidth]{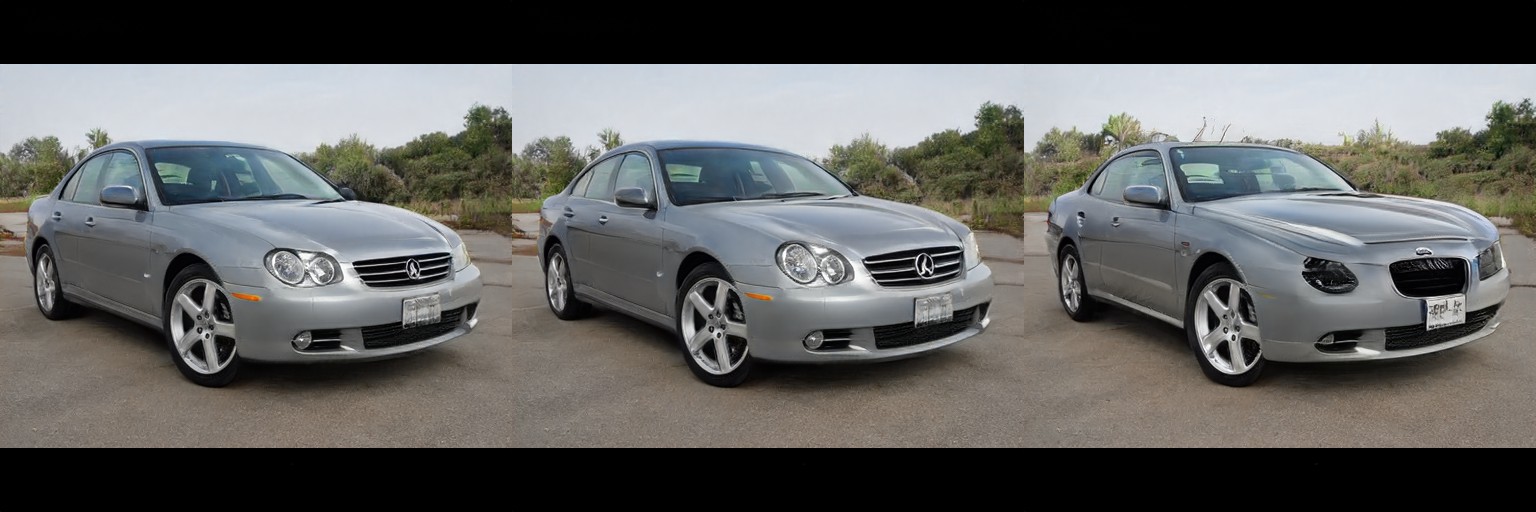}
         \vspace{-1.02cm}\captionof*{figure}{}
 \end{minipage}
 
 \rotatebox[origin=lc]{90}{\centering \hspace{-0.5cm} \scriptsize{\textbf{Metfaces}}}
\begin{minipage}{.47\columnwidth}
         \centering
         \includegraphics[width=\textwidth]{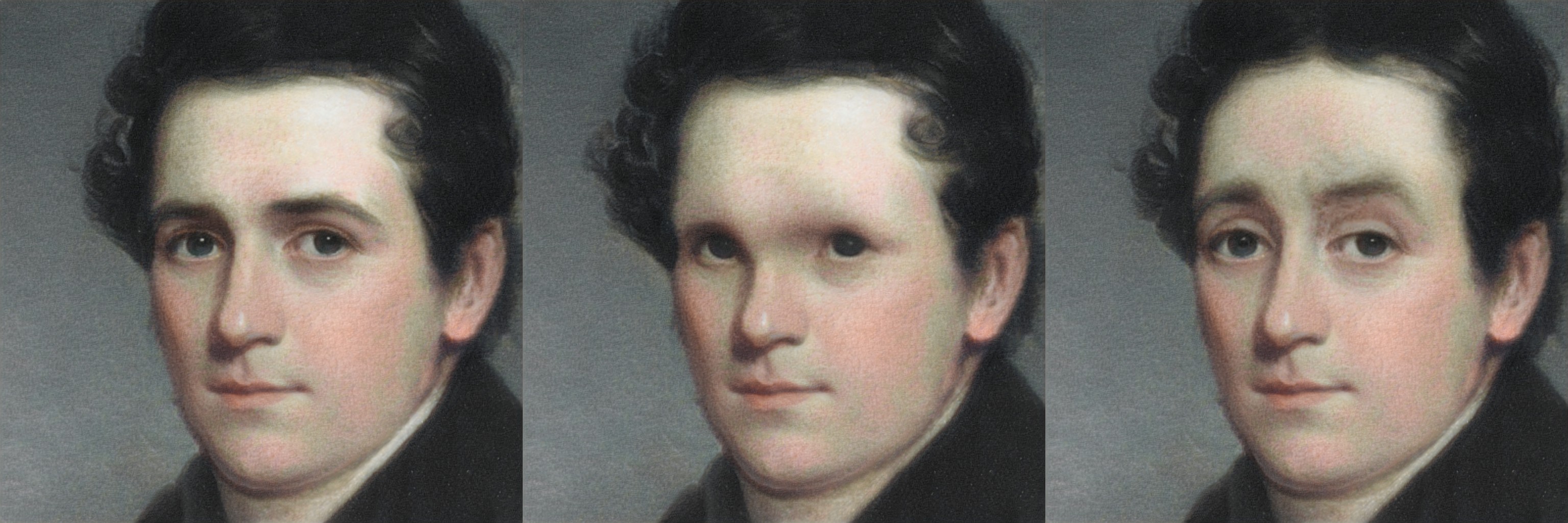}
         \vspace{-1.02cm}\captionof*{figure}{ }
 \end{minipage}
\begin{minipage}{.47\columnwidth}
         \centering
         \includegraphics[width=\textwidth]{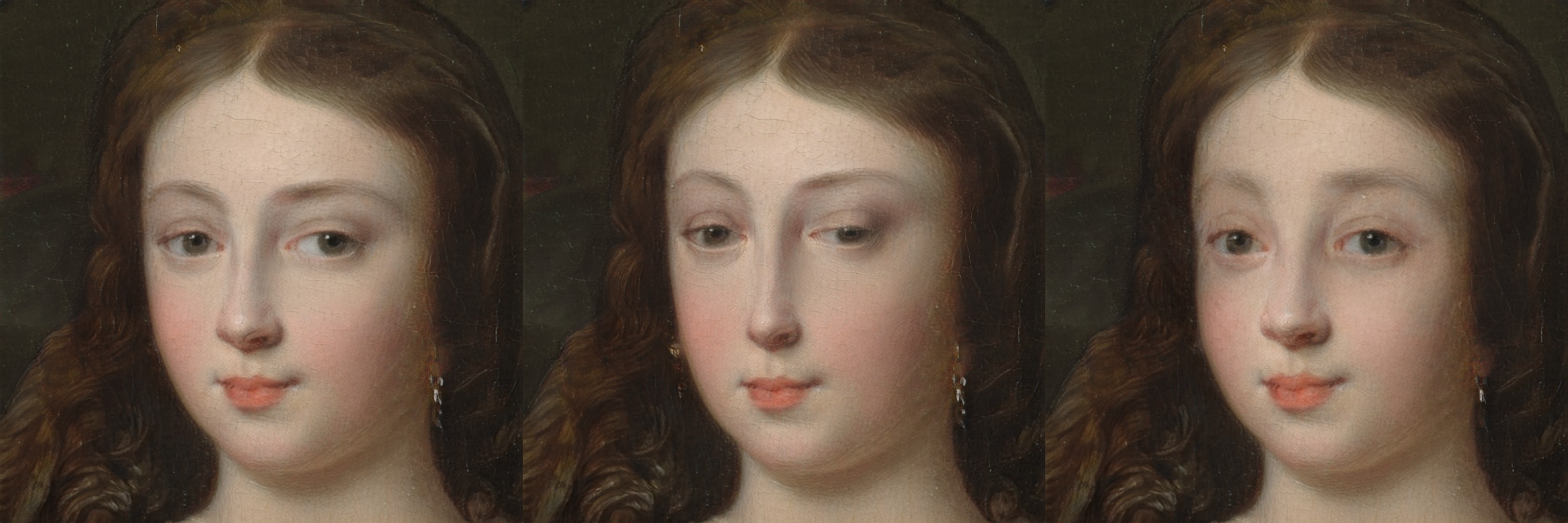}
         \vspace{-1.02cm}\captionof*{figure}{}
 \end{minipage}

 \vspace{-0.2cm}

\captionof{figure}{\label{fig:dress-cat-met-car} \textbf{Sample clusters for various datasets}. The first column represents the input image and the remaining columns show the manipulation performed by a random channel in the cluster.}
\vspace{-0.5cm}
\end{figure}

\begin{figure}[t!]\centering

\rotatebox[origin=lc]{90}{\centering \hspace{-0.6cm} \textbf{Layer 6}}
\begin{minipage}{.7\columnwidth}
     \centering
     \includegraphics[width=\textwidth]{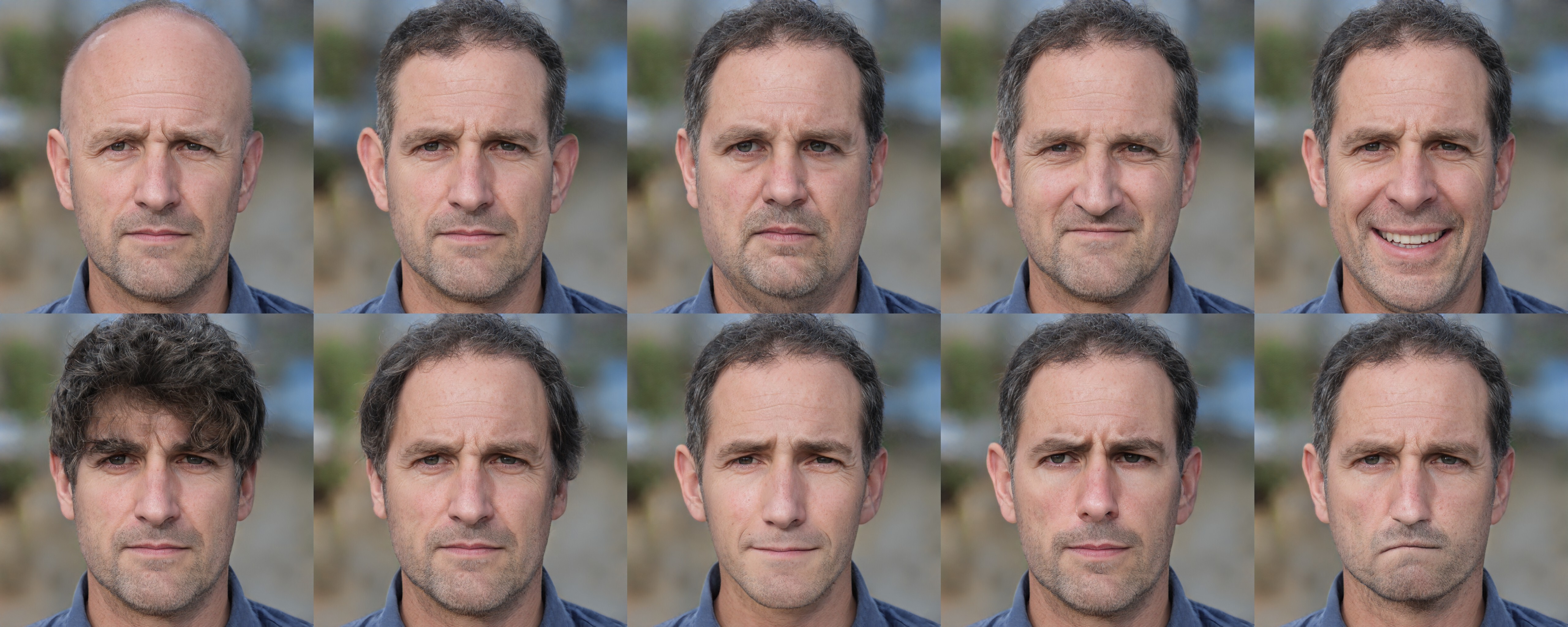}
     \vspace{-1.02cm}\captionof*{figure}{ }
\end{minipage}
\rotatebox[origin=lc]{90}{\centering \hspace{-0.8cm} \textbf{$\alpha+$} \hspace{0.4cm}  \textbf{$\alpha-$}}

\rotatebox[origin=lc]{90}{\centering \hspace{-0.6cm} \textbf{Layer 8}}
\begin{minipage}{.7\columnwidth}
     \centering
     \includegraphics[width=\textwidth]{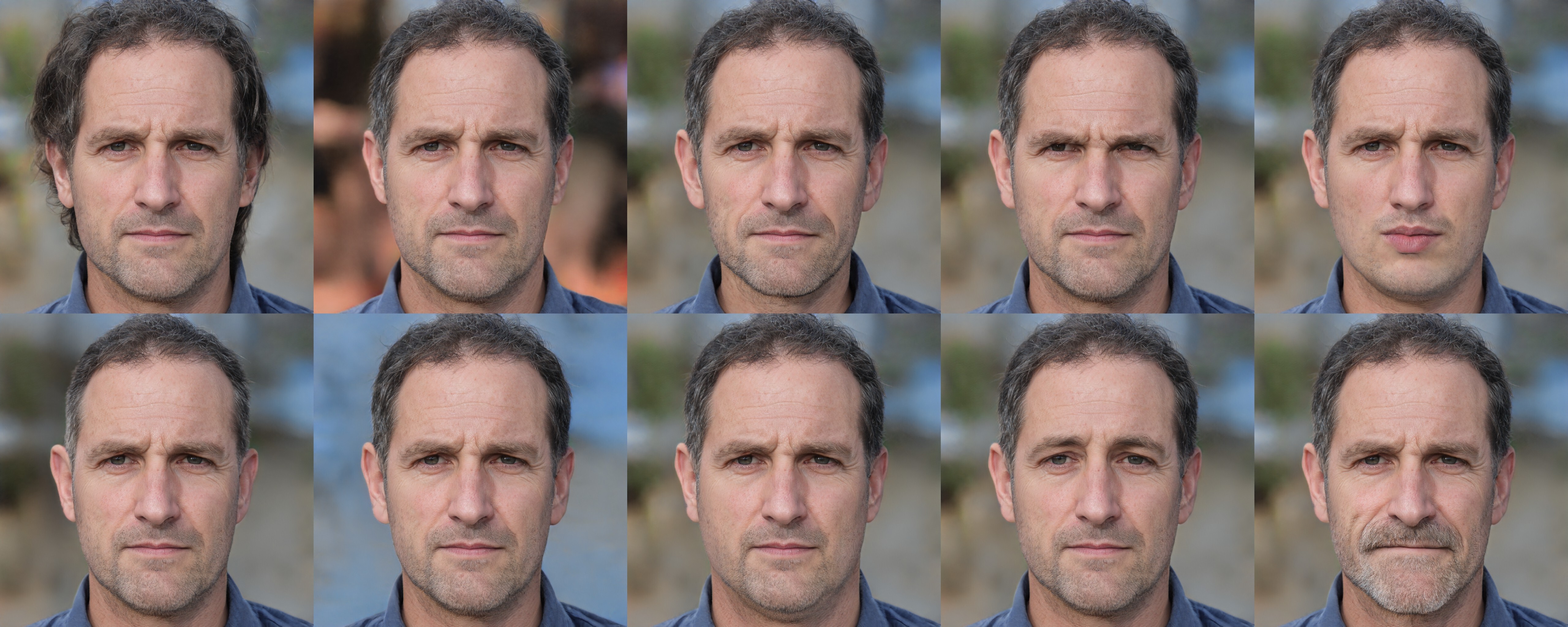}
     \vspace{-1.02cm}\captionof*{figure}{}
\end{minipage}
\rotatebox[origin=lc]{90}{\centering \hspace{-0.8cm} \textbf{$\alpha+$} \hspace{0.4cm}  \textbf{$\alpha-$}}

\rotatebox[origin=lc]{90}{\centering \hspace{-0.6cm} \textbf{Layer 9}}
\begin{minipage}{.7\columnwidth}
     \centering
     \includegraphics[width=\textwidth]{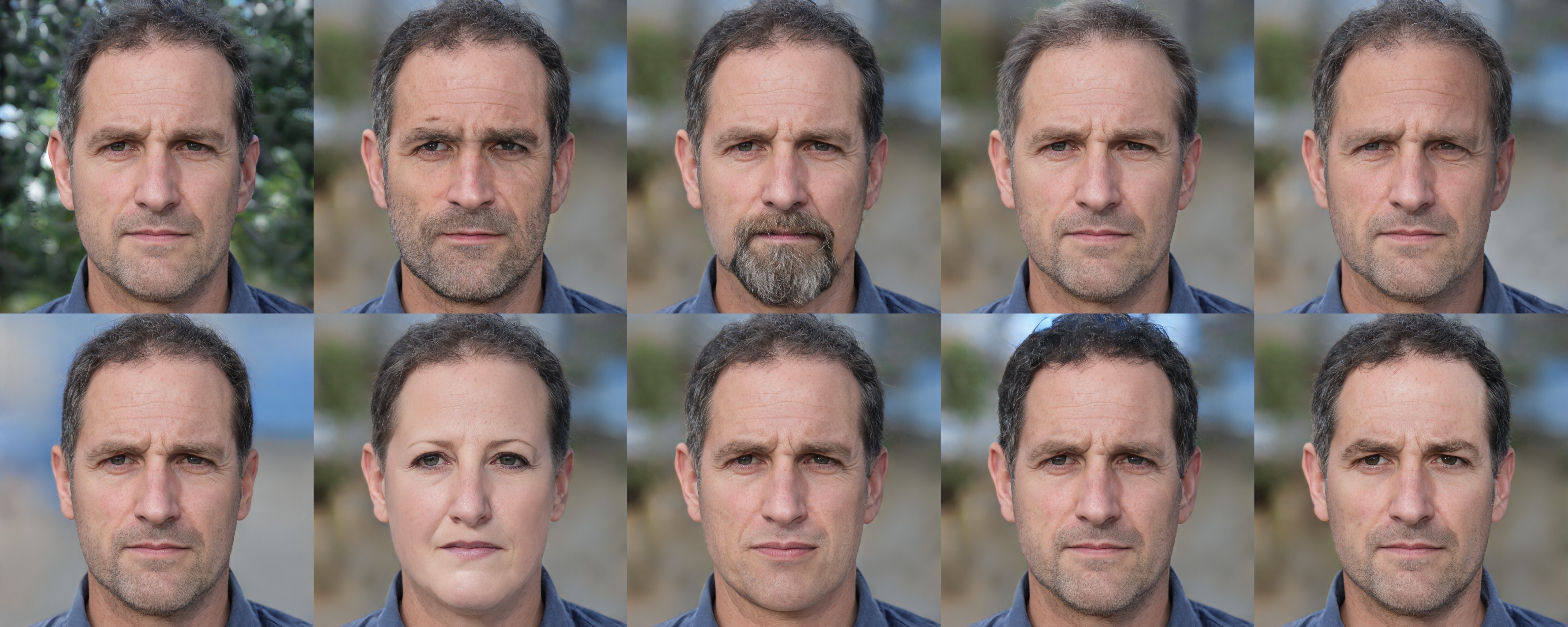}
     \vspace{-1.02cm}\captionof*{figure}{ }
\end{minipage}
\rotatebox[origin=lc]{90}{\centering \hspace{-0.8cm} \textbf{$\alpha+$} \hspace{0.4cm}  \textbf{$\alpha-$}}

\rotatebox[origin=lc]{90}{\centering \hspace{-0.6cm} \textbf{Layer 12}}
\begin{minipage}{.7\columnwidth}
     \centering
     \includegraphics[width=\textwidth]{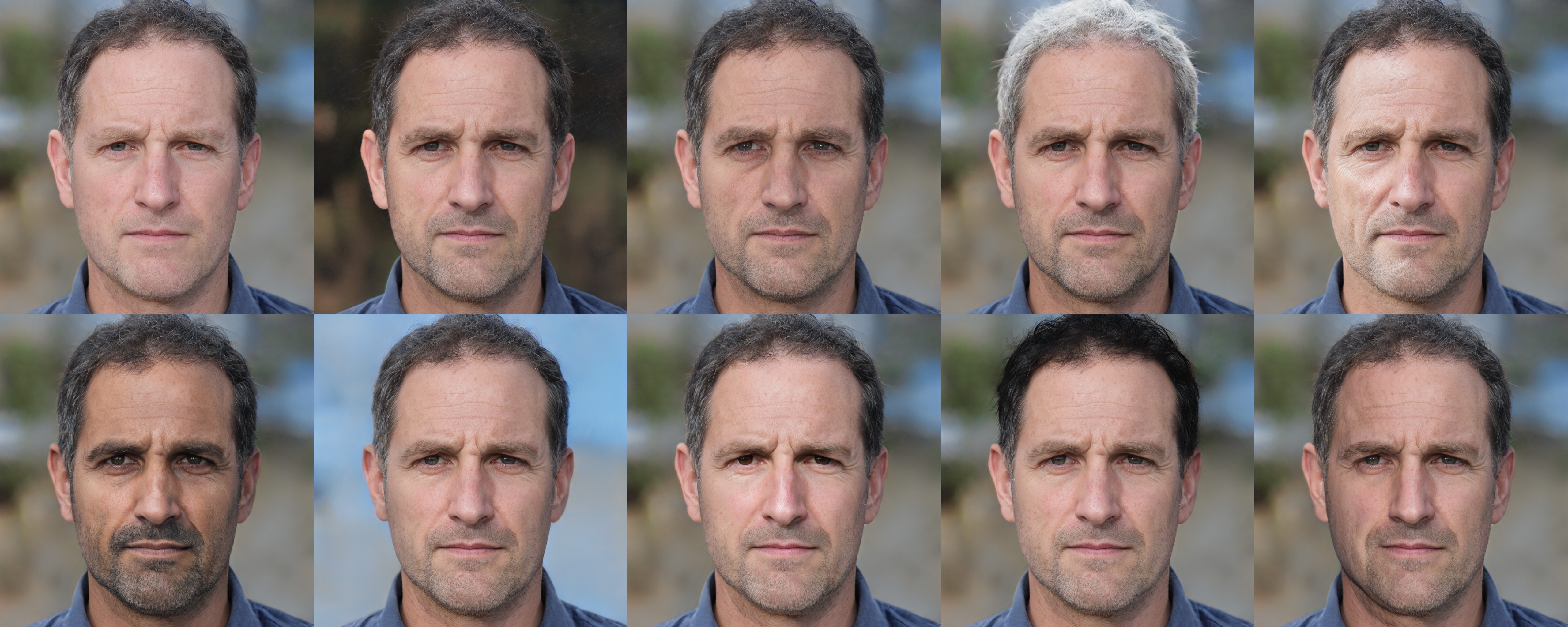}
     \vspace{-1.02cm}\captionof*{figure}{}
\end{minipage}
\rotatebox[origin=lc]{90}{\centering \hspace{-0.8cm} \textbf{$\alpha+$} \hspace{0.4cm}  \textbf{$\alpha-$}}

  \vspace{-0.2cm}

\captionof{figure}{\label{fig:layerwise-submodular} \textbf{Top 5 channels ranked by our submodular framework for individual layers}. As can be seen from the results, our method is able to select diverse channels for each layer.}
 \vspace{-0.5cm}

\end{figure}

\subsection{Experimental Setup}
For all experiments, we use the StyleGAN2 model \cite{Karras2020TrainingGA} with truncation value \textit{0.7}. For the LPIPS and SSIM scores, $\alpha$ is set to 20 and the number of style codes is set to $M=128$. It takes 1 hour to compute  LPIPS and SSIM scores. We use Scikit-learn \cite{scikit-learn} for agglomerative clustering, with the distance threshold parameter set to \textit{0.7}, resulting in about 20 to 40 clusters depending on the layer. Clustering per layer takes 5-15 seconds. Following \cite{wu2020stylespace}, we exclude RGB layers as they cause entangled manipulations, and we exclude the last 4 blocks as they represent very fine-grained features that are difficult to use for editing tasks. For the submodular framework, we use the diversity tradeoff $\lambda$ as $25$. For our experiments, we use a single NVIDIA Titan RTX GPU. 

\subsection{Qualitative Results}
 \textbf{Clustering Stylespace} Our submodular framework relies on the clusters to encourage  diversity. Figure \ref{fig:teaser} and Figure \ref{fig:dress-cat-met-car} show  clusters from the FFHQ, Fashion, AFHQ Cats, LSUN Cars, and Metfaces datasets. We note that clusters that modify similar regions are grouped together, such as \textit{smile, expression} in FFHQ, \textit{neck type, pattern} in Fashion, \textit{eye color, ear type} in AFHQ Cats, \textit{roof type, bumper type} in LSUN Cars, \textit{eyebrow type, expression} in Metfaces, shown in Figure \ref{fig:dress-cat-met-car}.

\textbf{Covering Stylespace} Our framework is flexible in terms of which groups of layers to cover. We can choose to cover only channels from a single layer or from multiple layers. In either case, one just needs to form the clusters based on the particular layers of interest. Next, we investigate both cases.

\begin{itemize}

\item \textbf{Single layers} We first experiment with selecting a subset of channels on single layers. Figure \ref{fig:layerwise-submodular} shows the top 5 channels for individual layers $L=6,8,9,12$. We see that our framework selects diverse channels for each layer, such as channels that modify \textit{hair, ear, face, expression, mouth} as in layer $L=6$ or \textit{background, gender, beard, hair, expression} as in layer $L=9$. Note that performing submodular ranking allows us to get the top channels for each layer, but is still not sufficient to cover the stylespace, as channels that perform similar edits may be ranked at top for different layers and cause redundancy. For example, channels that change \textit{background} are placed at the top in different layers (see first and second channels in layers $L=9, 8, 12$, respectively). Therefore, submodular selection at multiple layers is required to achieve adequate stylespace coverage, as we show below. 

\item \textbf{Multiple layers} Figure \ref{fig:comparison} shows the top 10 channels ranked by our method considering multiple layers. As can be seen from the results, our method selects a variety of channels that modify regions such as \textit{background, hair, face, mouth, eye, ear, and clothing}. We note that our method places a channel that modifies \textit{background} first, as this is one of the most popular types of editing offered by the stylespace. Covering another \textit{background} channel then has diminishing returns thanks to the submodularity property, and preference is given to channels that modify other diverse regions before placing another background channel at the $8^{th}$ position. 
\begin{figure}[t] 
\centering
        \includegraphics[width=0.9\columnwidth]{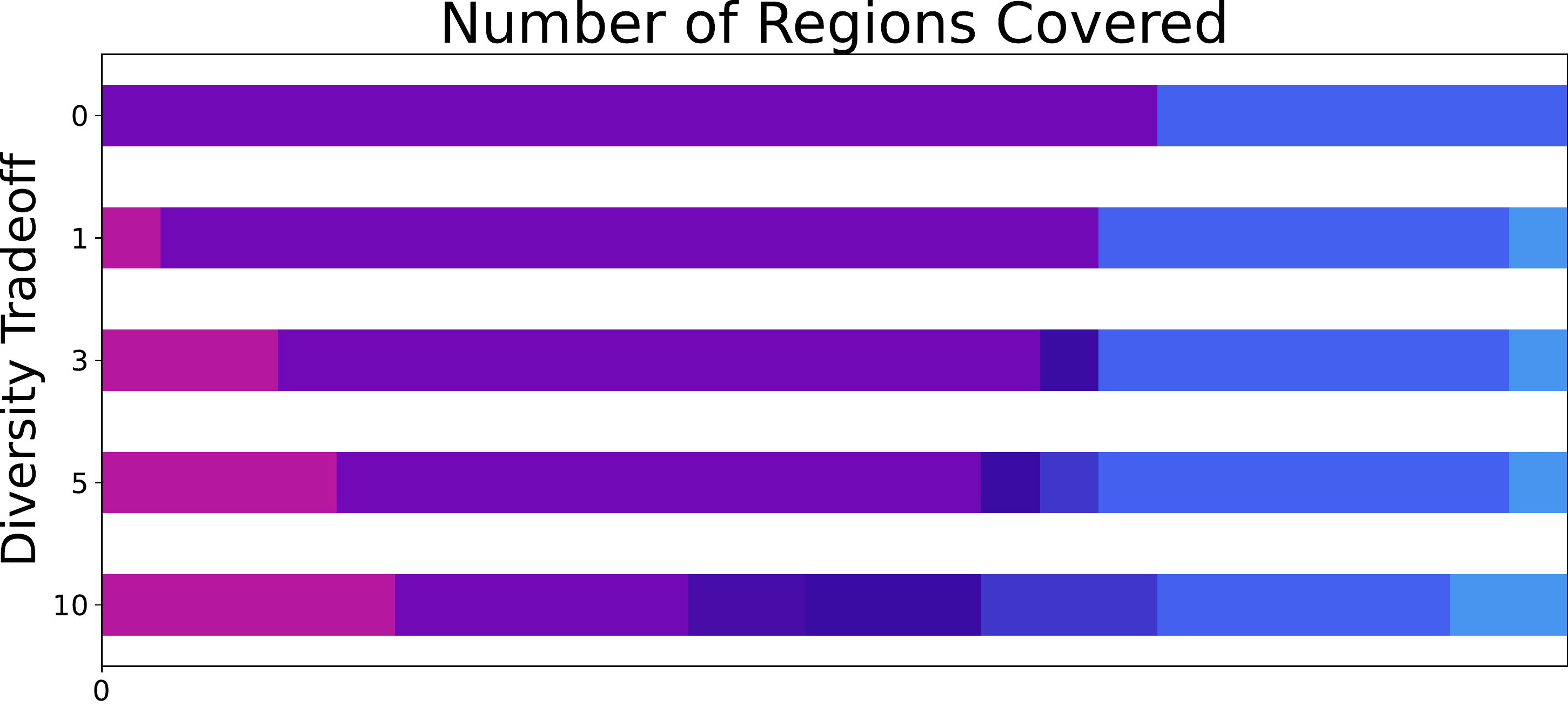}

\caption{\textbf{The effect of the diversity tradeoff}. The number of regions (indicated by different colors) covered by the top 25 channels in FFHQ. Our model covers more regions as we increase the diversity tradeoff $\lambda$ due to diminishing returns.}
\vspace{-0.5cm}
\label{fig:diversity}
\end{figure}

\item \textbf{Diversity tradeoff} We also examine the effects of the diversity parameter $\lambda$ (see Figure \ref{fig:diversity}). When the diversity parameter $\lambda = 0$, we find that the number of regions in the top 25 channels covers only two regions. When we increase the parameter $\lambda$, we find that more regions are covered and the balance between regions improves since the submodular framework accounts for diversity.
\end{itemize}
 
\begin{figure*}[t!]
\centering
\rotatebox[origin=lc]{90}{\centering \hspace{-0.6cm} \textbf{Ours\phantom{p}}}
\begin{minipage}{0.075\textwidth}
     \centering
     \includegraphics[width=\textwidth]{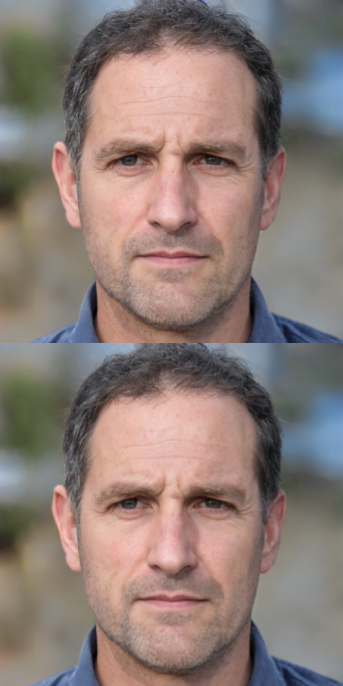}
\end{minipage}
\begin{minipage}{0.75\textwidth}
     \centering
     \includegraphics[width=\textwidth]{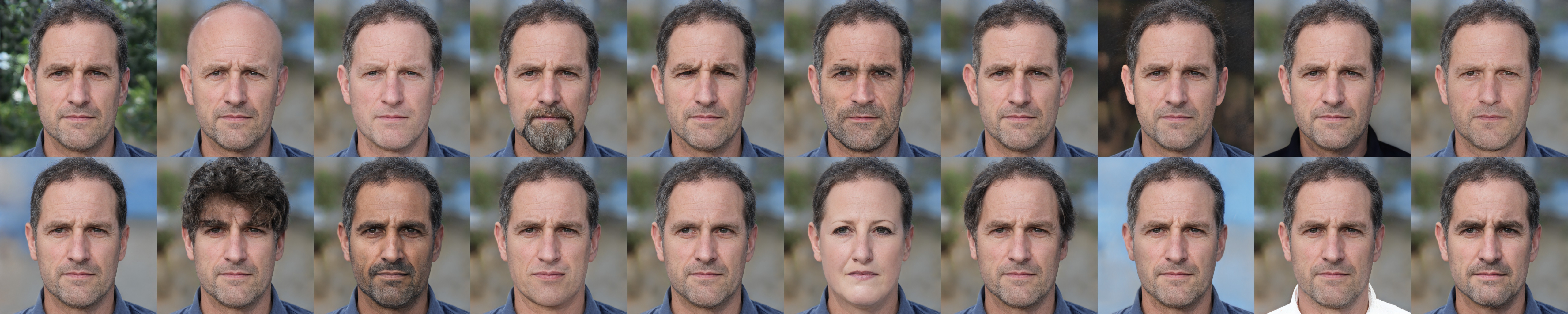}

\end{minipage}
\rotatebox[origin=lc]{90}{\centering \hspace{-1cm} \textbf{$\alpha+$} \hspace{0.7cm}  \textbf{$\alpha-$}}

\rotatebox[origin=lc]{90}{\centering \hspace{-1.02cm}   \textbf{Ganspace} }
\begin{minipage}{0.075\textwidth}
     \centering
     \includegraphics[width=\textwidth]{images/seed_1_grid.png}
\end{minipage}
\begin{minipage}{0.75\textwidth}
\centering
     \includegraphics[width=\textwidth]{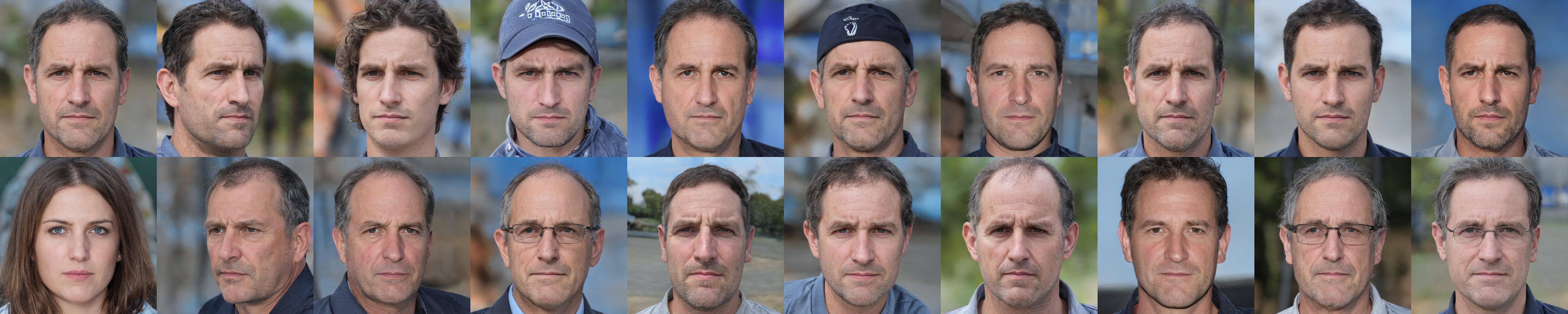}
\end{minipage}
\rotatebox[origin=lc]{90}{\centering \hspace{-1cm} \textbf{$\alpha+$} \hspace{0.7cm}  \textbf{$\alpha-$}}

\rotatebox[origin=lc]{90}{\centering \hspace{-0.6cm} \textbf{SeFa\phantom{p}}}
\begin{minipage}{0.075\textwidth}
     \centering
     \includegraphics[width=\textwidth]{images/seed_1_grid.png}
\end{minipage}
\begin{minipage}{0.75\textwidth}
    \centering
     \includegraphics[width=\textwidth]{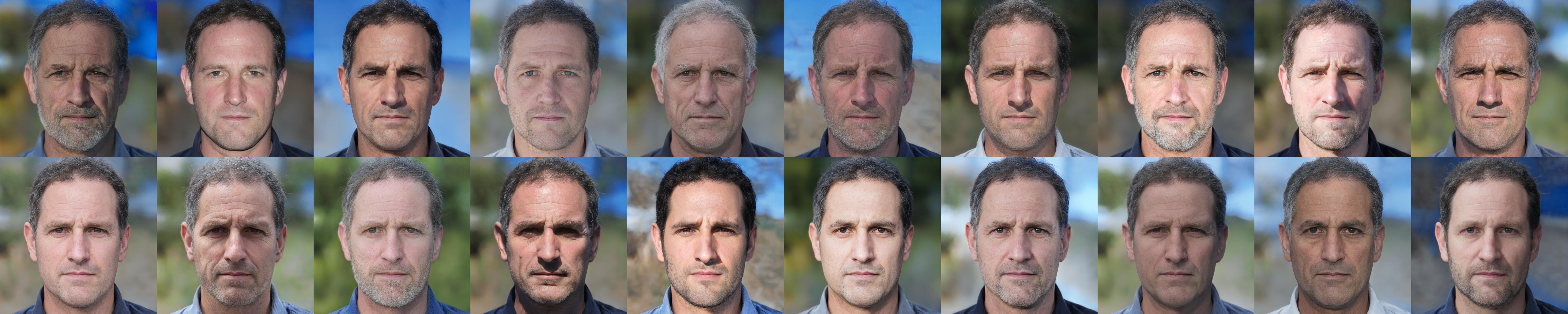}

\end{minipage}
\rotatebox[origin=lc]{90}{\centering \hspace{-1cm} \textbf{$\alpha+$} \hspace{0.7cm}  \textbf{$\alpha-$}}
\captionof{figure}{\label{fig:comparison} Comparison of the top 10 directions for Ganspace\cite{harkonen2020ganspace}, SeFa\cite{shen2020closed} and our method. The first column shows the original image.}
\vspace{-0.3cm}
\end{figure*}

\subsection{Comparison with Unsupervised Methods}
Next, we compare our results with the state-of-the-art unsupervised methods Ganspace \cite{harkonen2020ganspace} and SeFa \cite{shen2020closed}. Ganspace applies PCA to randomly sampled $w$ vectors of StyleGAN2 and uses the resulting principal components as directions. SeFa uses a closed-form approach where it factorizes the weight matrix and uses the resulting eigenvectors with the highest eigenvalues as directions. We used the official implementations for both methods\footnote{\url{http://github.com/harskish/ganspace}, \url{http://github.com/genforce/sefa}} and obtained the top 10 principal components for Ganspace and the top 10 eigenvectors for SeFa methods using the default parameters. Note that since the directions vary by the choice of layers used in SeFa, we experimented with all options (layers 0-1, 2-5, 6-13, all) and chose layers 6-13 because they have the most diverse and semantically meaningful directions (see Appendix \ref{app:sefa}). As can be seen from Figure \ref{fig:comparison}, our method yields more disentangled and diverse directions compared to Ganspace and SeFa. For example, while both Ganspace and SeFA change semantics in the input, such as \textit{gender, age, eyeglasses}, while also changing other semantics  such as \textit{background, position, highlight} at the same time. In contrast, our method performs disentangled edits by changing one semantic at a time. To verify our observations, we also conduct a user study with $N=25$ participants. For the user study, we list the top 10 manipulations of each method along with the original image and ask the following questions:\\

\noindent \textbf{(Q1)} \textit{`How disentangled do you think the change in each image is?'}  \textit{(Note that disentanglement is the degree to which each latent dimension captures at most one attribute.)} (1=Not Disentangled 5=Very disentangled)\\

\noindent  \textbf{(Q2)} \textit{`How semantically meaningful do you think the change in each image is?'} (1=Not Semantically Meaningful 5=Very Semantically Meaningful)

\begin{table}[ht]
\centering 
 \centering
 \begin{tabular}{|p{1.1cm}|c|c|c|}
\hline
\textbf{Model} & \textbf{Ganspace} & \textbf{SeFa} & \textbf{Ours}  \\
\hline\hline
Q1 & 2.46 \scriptsize{$\pm$ 0.45} & 2.91 \scriptsize{$\pm$ 0.41} & 4.32 \scriptsize{$\pm$ 0.31} \\ \hline
Q2 & 3.45 \scriptsize{$\pm$ 0.41} & 3.26 \scriptsize{$\pm$ 0.28} & 4.20 \scriptsize{$\pm$ 0.29}\\
\hline
\end{tabular}
\caption{Comparison with Ganspace and SeFa for \textit{Disentanglement}  $\uparrow$ (Q1) and \textit{Semantically Meaningful} $\uparrow$ (Q2) questions.}
\label{table:unsup_comparison}
\vspace{-0.3cm}
\end{table}
\begin{figure}[t!]\centering

\begin{minipage}{0.7\columnwidth}
    \rotatebox[origin=lc]{90}{\centering \hspace{-0.5cm} \textbf{Ours\phantom{[}}}
    \begin{minipage}{0.95\columnwidth}
         \centering
         \includegraphics[width=\textwidth]{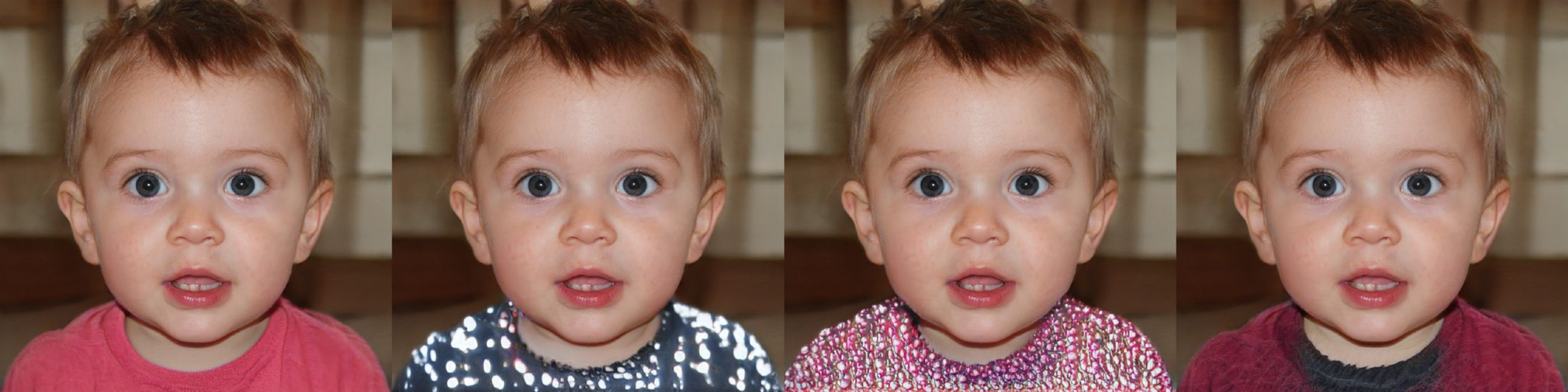}
         \vspace{-1.02cm}\captionof*{figure}{ }
    \end{minipage}
    
    \rotatebox[origin=lc]{90}{\centering \hspace{-0.3cm} \cite{wu2020stylespace}}
    \begin{minipage}{0.95\columnwidth}
         \centering
         \includegraphics[width=\textwidth]{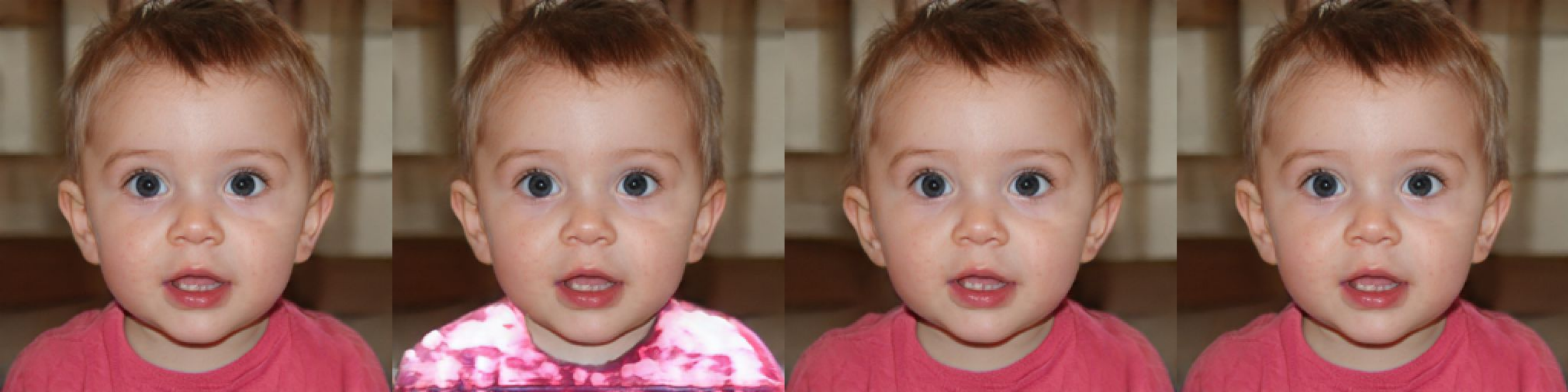}
         \vspace{-1.02cm}\captionof*{figure}{ }
    \end{minipage}
\end{minipage}
\rotatebox[origin=lc]{270}{\centering \hspace{-1.2cm} \textbf{Cloth Region}}

\begin{minipage}{0.7\columnwidth}
    \rotatebox[origin=lc]{90}{\centering \hspace{-0.5cm} \textbf{Ours\phantom{[}}}
    \begin{minipage}{0.95\columnwidth}
         \centering
         \includegraphics[width=\textwidth]{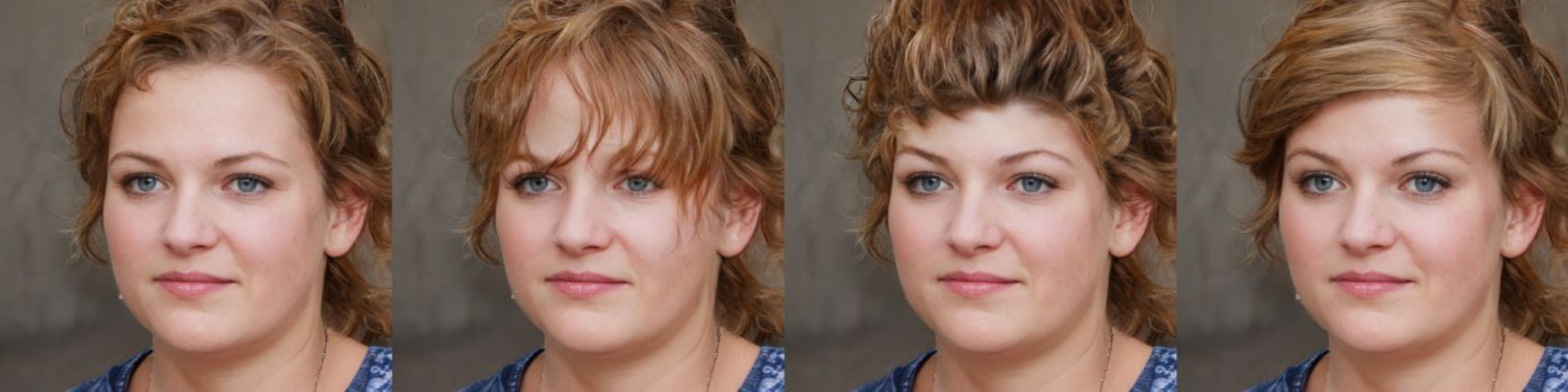}
         \vspace{-1.02cm}\captionof*{figure}{ }
    \end{minipage}
    
    \rotatebox[origin=lc]{90}{\centering \hspace{-0.3cm} \cite{wu2020stylespace}}
    \begin{minipage}{0.95\columnwidth}
         \centering
         \includegraphics[width=\textwidth]{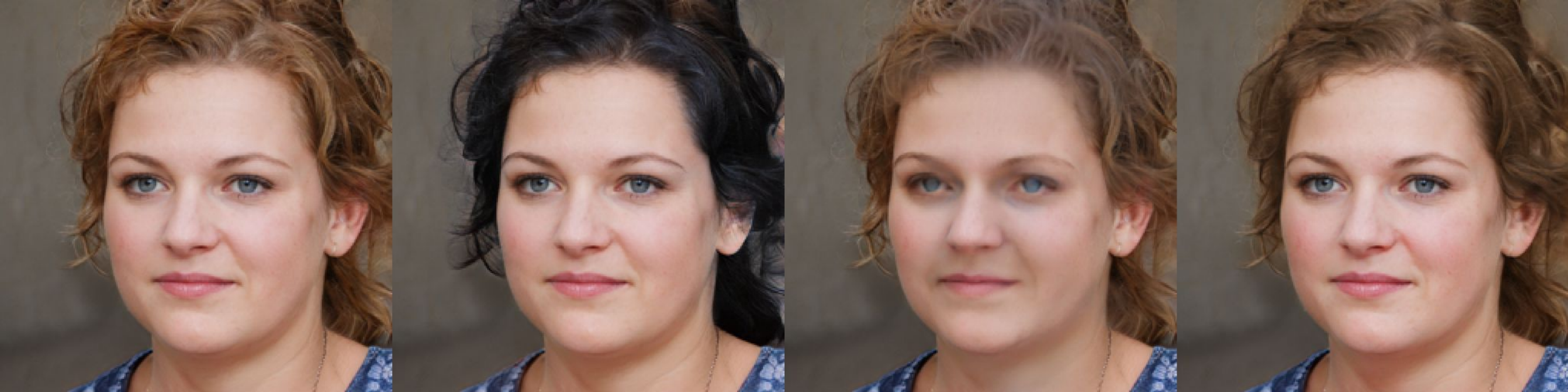}
         \vspace{-1.02cm}\captionof*{figure}{ }
    \end{minipage}
\end{minipage}
\rotatebox[origin=lc]{270}{\centering \hspace{-1.1cm} \textbf{Hair Region}}

\begin{minipage}{0.7\columnwidth}
    \rotatebox[origin=lc]{90}{\centering \hspace{-0.5cm} \textbf{Ours\phantom{[}}}
    \begin{minipage}{0.95\columnwidth}
         \centering
         \includegraphics[width=\textwidth]{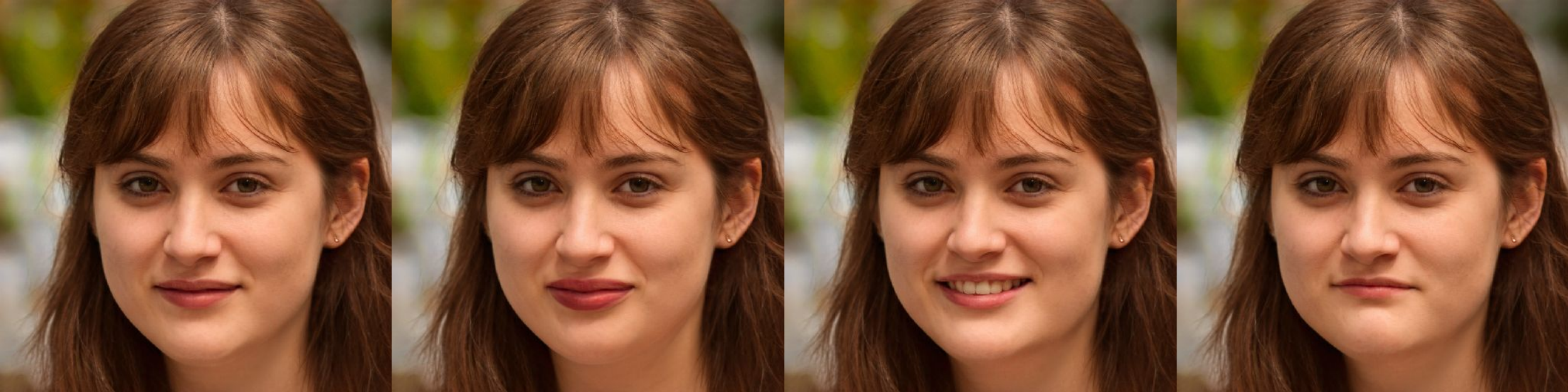}
         \vspace{-1.02cm}\captionof*{figure}{ }
    \end{minipage}
    
    \rotatebox[origin=lc]{90}{\centering \hspace{-0.3cm} \cite{wu2020stylespace}}
    \begin{minipage}{0.95\columnwidth}
         \centering
         \includegraphics[width=\textwidth]{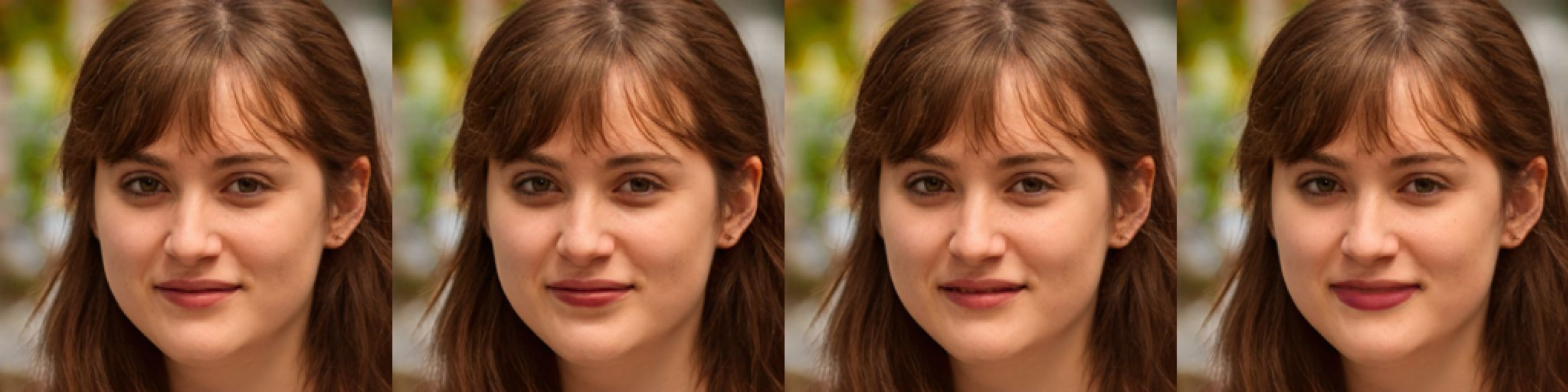}
         \vspace{-1.02cm}\captionof*{figure}{ }
    \end{minipage}
\end{minipage}
\rotatebox[origin=lc]{270}{\centering \hspace{-1.2cm} \textbf{Mouth Region}}

\vspace{-0.2cm}

\captionof{figure}{\textbf{Comparison of channels retrieved using our method and \cite{wu2020stylespace}} for \textit{cloth, hair} and \textit{mouth} regions. Our method is able to capture more diverse channels for a given region.}
\label{fig:stylespace_comparison}
\vspace{-0.5cm}
\end{figure}

As can be seen from Table \ref{table:unsup_comparison}, our method has more disentangled and semantically meaningful directions. All results are statistically significant with a \emph{p}-value of $< 0.0001$. Our method showed a significant performance especially on the \textit{disentanglement} question, with an improvement of $49\%$ over the closest competitor since we operate in $\mathcal{S}$-space, while other methods operate in $\mathcal{W}$-space.

\subsection {Comparison with Supervised Methods}
Both our work and \cite{wu2020stylespace} use stylespace to find style channels that can be used as directions. While our method proposes an unsupervised method for finding the top channels in stylespace, \cite{wu2020stylespace} uses a supervised approach where channels are retrieved based on a specific region (such as \textit{mouth}) or based on a specific attribute classifier. Since \cite{wu2020stylespace} does not provide a way to list the top channels in stylespace, we compare our results with \cite{wu2020stylespace} as follows: we select three regions; \textit{hair, mouth} and \textit{background}. Then, using the official implementation of \cite{wu2020stylespace}\footnote{\url{http://github.com/betterze/StyleSpace}}, we determined top 3 channels for a given region. For our method, we determined  3 clusters with the highest match for a given region and selected a random channel from the obtained clusters. Figure \ref{fig:stylespace_comparison} shows the results for both methods. As can be seen from the figure, our method is able to obtain diverse channels for the regions \textit{clothing, hairstyle} and \textit{mouth}. To verify our observations, we also conduct a user study with $N=25$ participants. We list the results for each method with the original image on the left and ask the question \textit{`How diverse do you think the changes are? (1=Not Diverse 5=Very Diverse)'} to participants \footnote{Note that since both methods use the $\mathcal{S}$-space for disentangled edits, we do not compare for disentanglement.}. As can be seen from the results in Table \ref{table:sup_comparison}, our method showed significantly better diversity than \cite{wu2020stylespace} with a \emph{p}-value of $< 0.0001$. This is due to the fact that \cite{wu2020stylespace} retrieves channels without considering their similarity, while our method considers channels from different clusters.

\begin{table}[ht]
\centering 
 \begin{tabular}{|p{1.1cm}|c|c|}
\hline
\textbf{Model} & \textbf{\cite{wu2020stylespace}} & \textbf{Ours} \\
\hline\hline
Cloth & 2.26 \scriptsize{$\pm$ 1.63} & 4.32 \scriptsize{$\pm$ 0.48}\\
Hair & 2.68 \scriptsize{$\pm$ 1.16} & 4.35 \scriptsize{$\pm$ 0.18} \\
Mouth & 2.16 \scriptsize{$\pm$ 0.38} & 3.64 \scriptsize{$\pm$ 0.64} \\
\hline
\end{tabular}
\caption{Comparison with supervised method \cite{wu2020stylespace} on \textit{Diversity} $\uparrow$.}
\label{table:sup_comparison}
\vspace{-0.5cm}
\end{table} 

\subsection{Applications}
Our framework also opens up possibilities for interesting applications that help users discover new directions.

\textbf{Interactive Editing} Users can navigate the stylespace by drawing  a region of interest such as \textit{hair} and retrieving relevant clusters and corresponding channels. Figure \ref{fig:channel-filter} shows the \textit{background} region with the retrieved clusters (a random channel from each cluster is shown). \textit{See Appendix \ref{app:interactive} for more examples}.

\begin{figure}[ht!]\centering

\rotatebox[origin=lc]{90}{\centering \hspace{-1.2cm} \scriptsize{\textbf{Background Attribute}}}
\begin{minipage}{.6\columnwidth}
     \centering
     \includegraphics[width=\columnwidth]{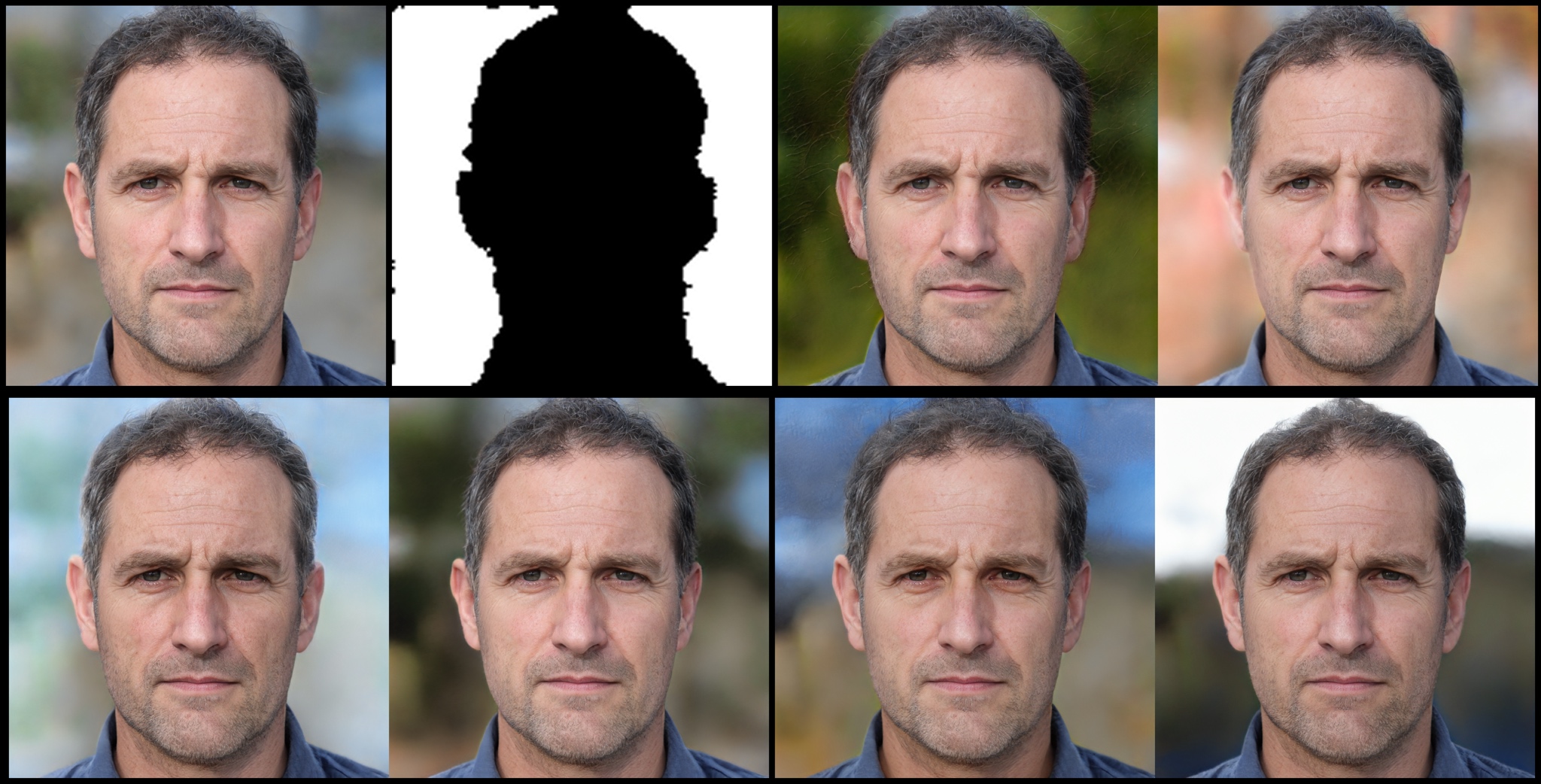}
     \vspace{-1.02cm}\captionof*{figure}{ }
\end{minipage}
    
\captionof{figure}{\label{fig:channel-filter} \textbf{Filtered clusters based on a region specified by the user}. The two images in the upper left show the input image and the user-specified region. The remaining images show randomly selected channels from the retreived clusters.}
\vspace{-0.3cm}
\end{figure}

\textbf{Exploration Platform}
We also provide a web-based platform called \textit{Style Atlas} at {\small \url{http://catlab-team.github.io/styleatlas}} where users can explore the stylespace in a fine-grained way (see Appendix \ref{app:atlas} for a view of the platform). This tool allows users to explore the manipulations made by specific channels based on the region and discover style channels of interest. 

\section{Social Impact and Limitations}
\label{sec:limitations}
Our method uses a pre-trained GAN model as input, so it is limited to manipulating GAN -generated images. However, it can be extended to real images using GAN inversion methods \cite{zhu2020domain} by encoding the real images into the latent space. Like any image synthesis tool, our method poses similar misuse concerns and dangers, as it can be applied to images of people or faces for malicious purposes, as discussed in \cite{korshunov2018deepfakes}. Our method currently applicable to style-based GAN methods such as StyleGAN2, since it directly benefits from the stylespace. We also note that our edits mainly depend on a single StyleGAN2 channel to make disjoint changes. However, it has been shown that more complex edits like \textit{aging} depend on multiple channels, such as \textit{white hair, wrinkles, or glasses} \cite{wu2020stylespace}. We leave the extension to find combinations of style channels and the exploration of our framework to other GAN models such as BigGAN to future work.
 
\section{Conclusion}
\label{sec:conclusion}
In this work, we consider the selection of diverse directions in the latent space of StyleGAN2 as a coverage problem. We formulate our framework as a submodular optimization  for which we provide an efficient solution. Moreover, we provide a complete guide to the stylespace in which one can explore hundreds of diverse directions formed by style channels using clusters. In our experiments, we have shown that our method can identify a variety of manipulations, and performs diverse and disentangled edits.

\textbf{Acknowledgements}
This publication has been produced benefiting from the 2232 International Fellowship for Outstanding Researchers Program of TUBITAK (Project No:118c321). 

{\small
\bibliographystyle{ieee_fullname}
\bibliography{egbib}
}

\appendix

\begin{figure*}
\centering
        \includegraphics[width=2\columnwidth]{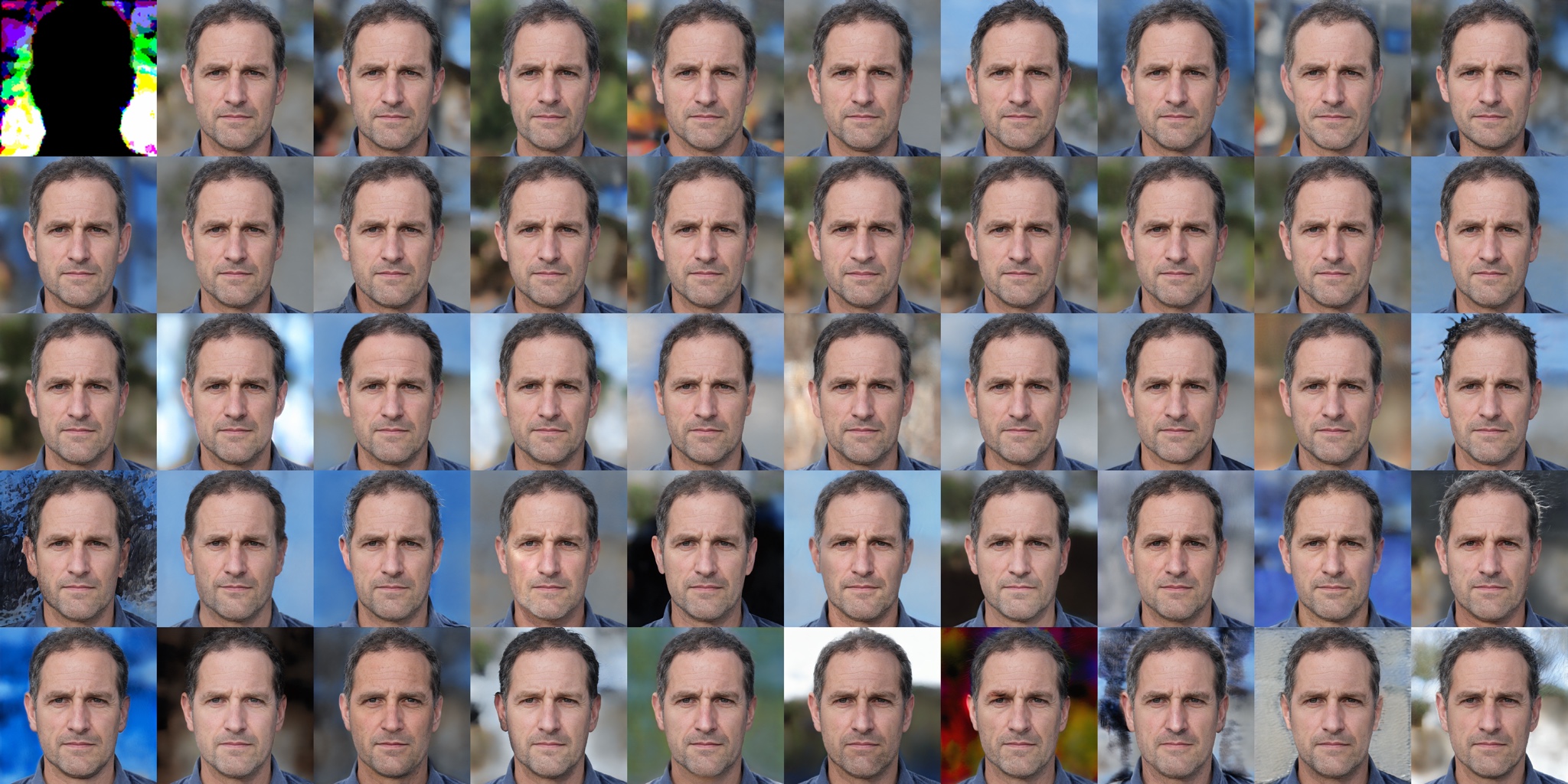}

\caption{\label{fig:background}Various background channels in the style space retrieved based on the heatmap on the upper left.}
\end{figure*}
\section{Background channels}
\label{app:bg}

Background is one of the most popular types of edits offered in stylespace. Figure \ref{fig:background} shows various  background channels in stylespace retrieved.

\begin{figure}\centering

\rotatebox[origin=lc]{90}{\centering \hspace{-1.1cm} \textbf{Hair Attribute}}
\begin{minipage}{.9\columnwidth}
     \centering
     \includegraphics[width=\columnwidth]{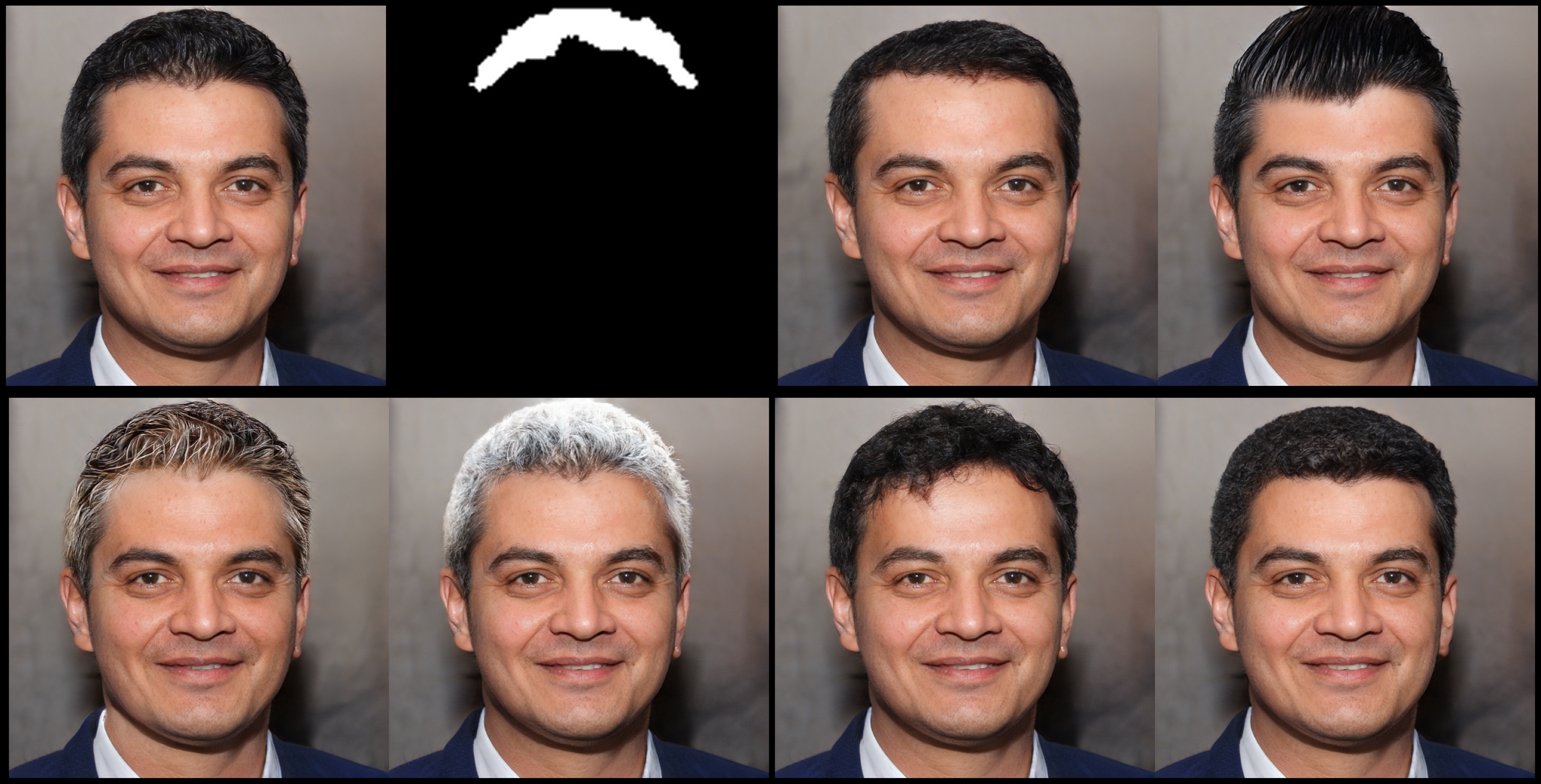}
     \vspace{-1.02cm}\captionof*{figure}{ }
\end{minipage}

\rotatebox[origin=lc]{90}{\centering \hspace{-1.25cm} \textbf{Mouth Attribute}}
\begin{minipage}{.9\columnwidth}
     \centering
     \includegraphics[width=\columnwidth]{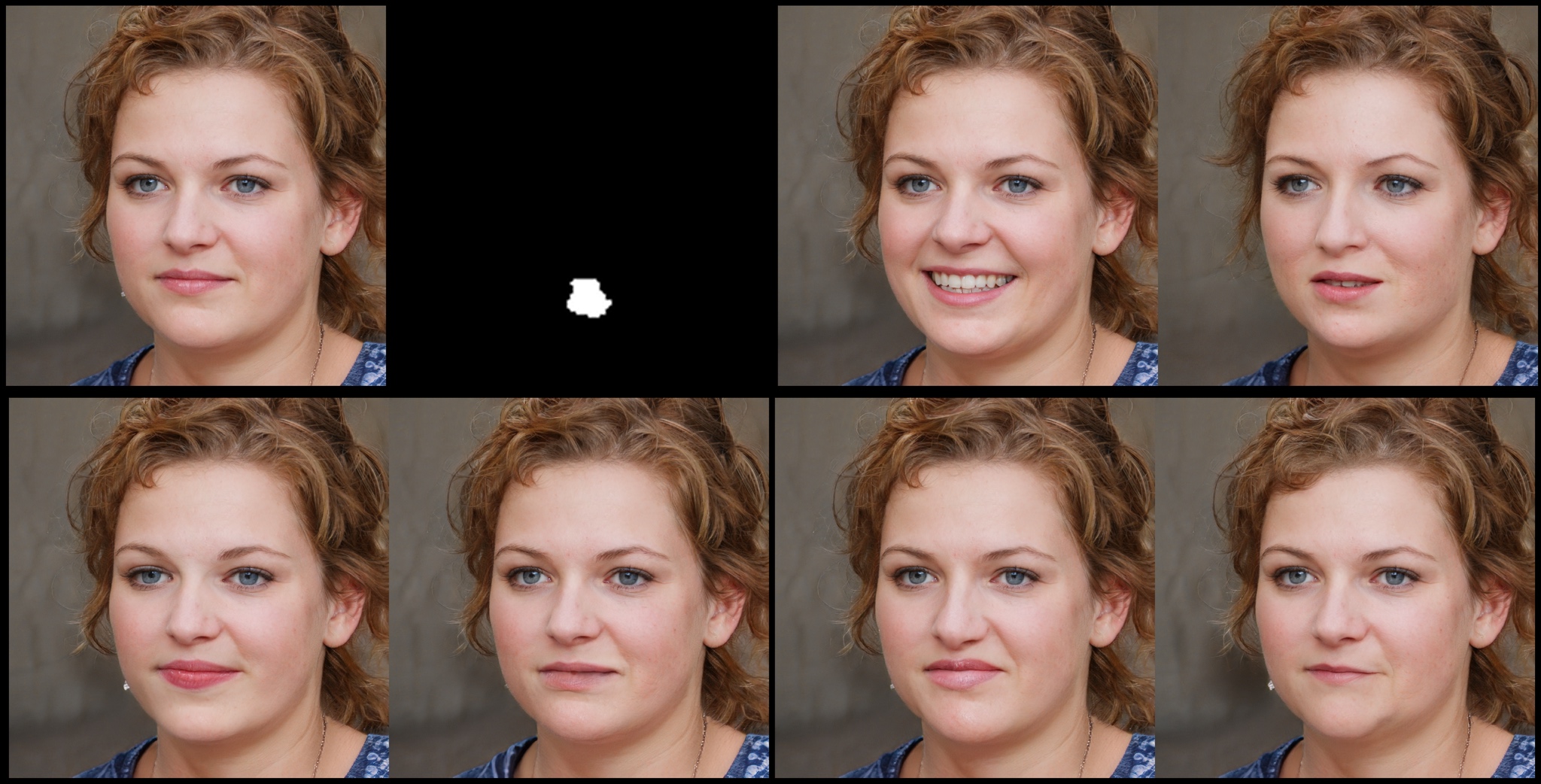}
     \vspace{-1.02cm}\captionof*{figure}{ }
\end{minipage}

\captionof{figure}{\label{fig:channel-filter_appendix} \textbf{Filtered clusters based on a region specified by the user}. The two images in the upper left show the input image and the region specified by the user, while the other images show a randomly selected channel from each retrieved cluster.}
\end{figure}

\begin{figure}
\centering
        \includegraphics[width=0.8\columnwidth]{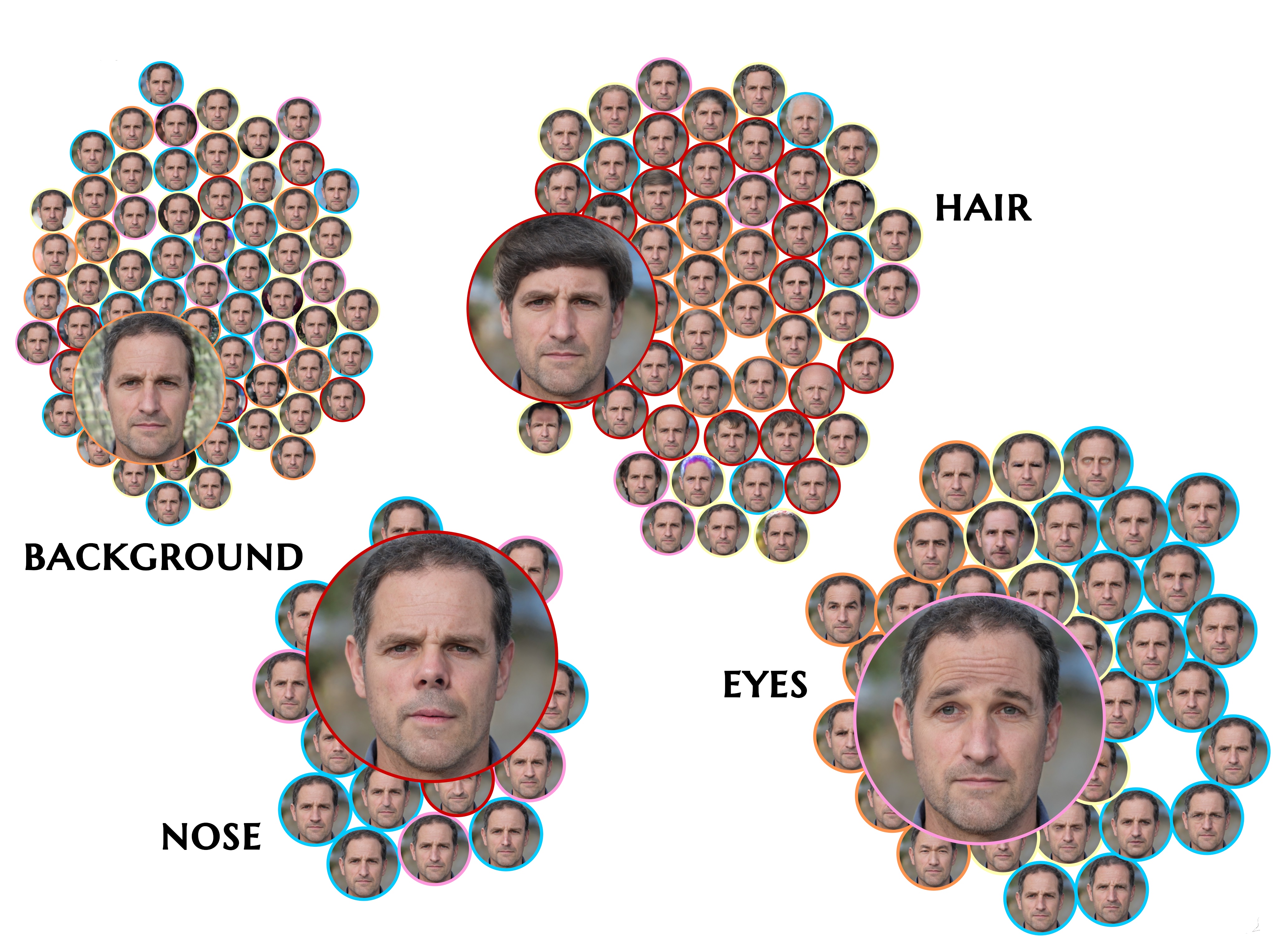}

\caption{A view of the stylespace exploration platform where each group represents a different region, such as \textit{nose} or \textit{eyes}. The bubbles represent manipulation done by a particular channel. The colors around the bubbles represent different layers (zoom for better view).}
\label{fig:styleatlas}
\vspace{-0.5cm}
\end{figure}

\section{Different Options for SeFa}
\label{app:sefa}
SeFa \cite{shen2020closed} uses different layer options when producing the directions. We chose the option 6-13 for comparison since it offers the most semantically meaningful edits compared to other options. We list other options in Figure \ref{fig:sefa}.

\begin{figure*}
\centering
(a)
\begin{minipage}{0.7\textwidth}
     \centering
     \includegraphics[width=\textwidth]{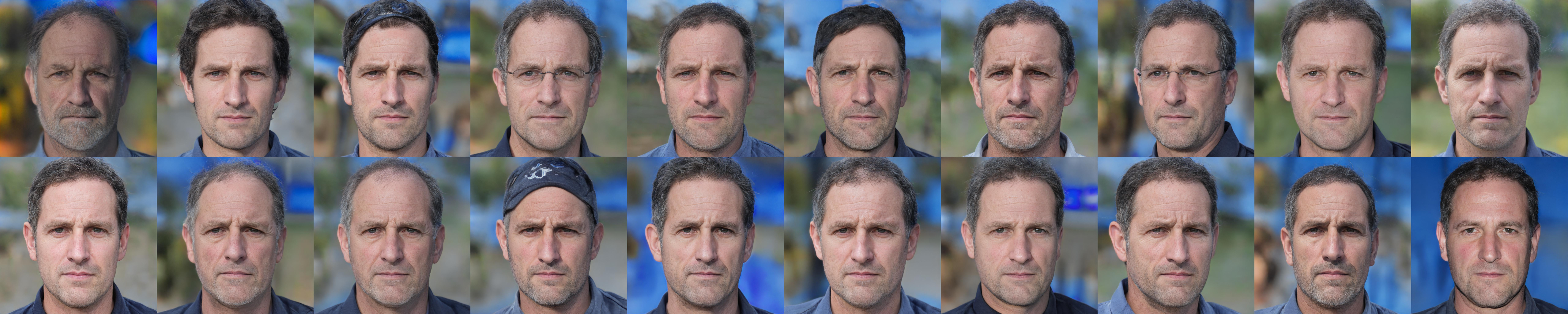}
\end{minipage}

(b)
\begin{minipage}{0.7\textwidth}
     \centering
     \includegraphics[width=\textwidth]{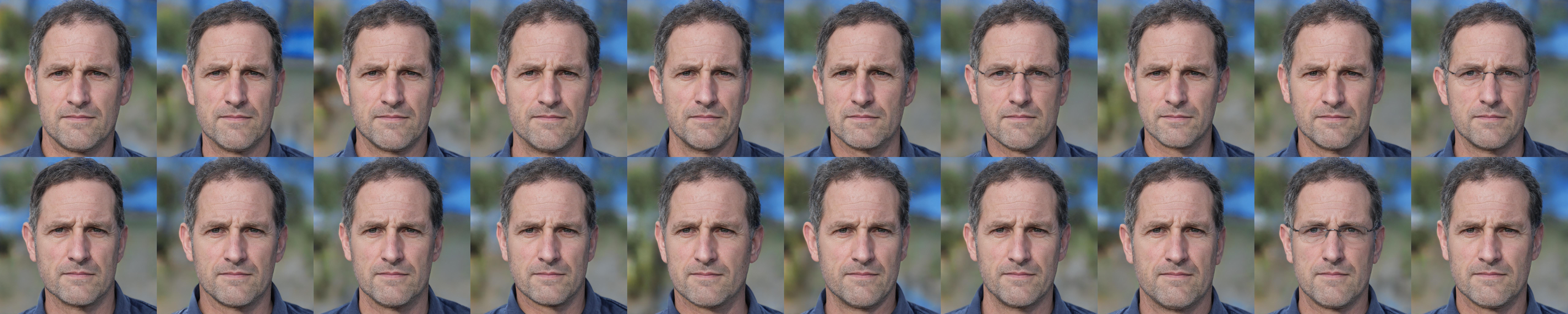}
\end{minipage}

(c)
\begin{minipage}{0.7\textwidth}
     \centering
     \includegraphics[width=\textwidth]{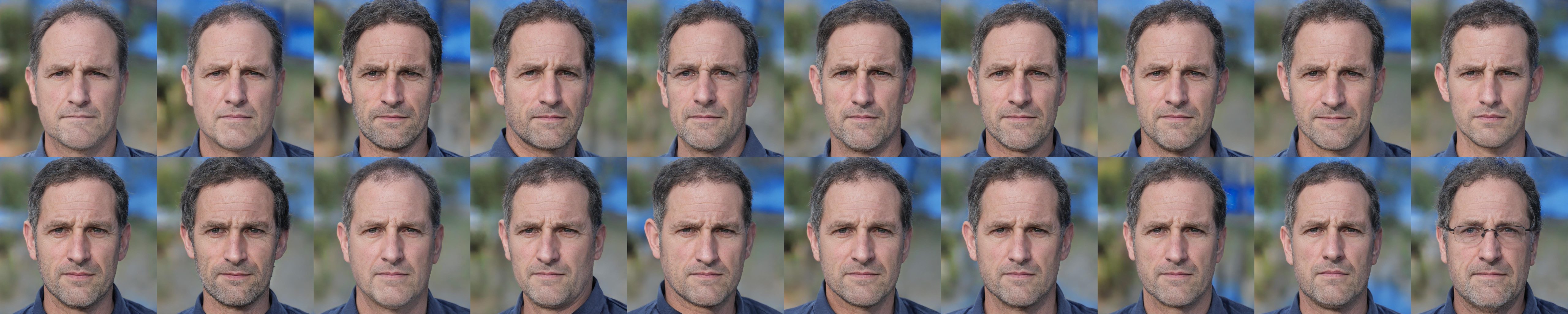}
\end{minipage}
 
 \vspace{-0.2cm}
\captionof{figure}{\label{fig:sefa} Different options for SeFa\cite{shen2020closed}: all layers (a), 0-1 layers (b), and 2-5 layers (c).}
\vspace{-0.2cm}
\end{figure*}

\section{Interactive Editing}
\label{app:interactive}

Additional samples for filtered clusters based on a region specified by the user is shown in Figure \ref{fig:channel-filter_appendix}.

\section{Style Atlas Platform}
\label{app:atlas}
A view of the stylespace exploration platform where each group represents a different region, such as \textit{nose} or \textit{eyes} is shown in Figure \ref{fig:styleatlas}.

\end{document}